\documentclass[letterpaper, 10 pt, conference]{ieeeconf}  

\IEEEoverridecommandlockouts                              
\usepackage{amsmath,amsfonts}
\usepackage{amssymb}
\usepackage{array}
\usepackage[caption=false,font=normalsize,labelfont=sf,textfont=sf]{subfig}
\usepackage{textcomp}
\usepackage{stfloats}
\usepackage{url}
\usepackage{verbatim}
\usepackage{graphicx}
\usepackage{cite}

\usepackage[table,xcdraw]{xcolor}
\usepackage{hyperref}  
\usepackage[ruled,vlined,linesnumbered,noend]{algorithm2e}
\usepackage{mdframed}
\usepackage{fancyvrb,multirow}
\usepackage{soul}
\usepackage{dsfont,mathabx}
\usepackage{threeparttable}
\usepackage{booktabs}
\usepackage{adjustbox} 
\usepackage{diagbox}
\usepackage{booktabs}
\usepackage{multicol}
\usepackage{makecell}
\usepackage{siunitx}
\usepackage{standalone}

\newtheorem{lemma}{Lemma}

\newtheorem{definition}{Definition}
\newtheorem{remark}{Remark}
\newtheorem{example}{Example}

\title{\LARGE \bf
UMBRELLA: Uncertainty-aware Multi-robot Reactive Coordination\\
under Dynamic Temporal Logic Tasks
}

\author{Qisheng Zhao$^1$, Meng Guo$^1$, Hengxuan Du$^1$, Lars Lindemann$^2$, and Zhongkui Li$^1$
\thanks{
   The authors are with: $^1$School of Advanced Manufacturing and Robotics,
   Peking University, Beijing 100871, China; and
   $^2$Automatic Control Laboratory, ETH Zürich, Zürich 8092, Switzerland.
   This work was supported by the National Natural Science Foundation of China under grants 62425301, U2241214, T2121002.
   Corresponding author: Zhongkui Li ({\tt\small zhongkui.li@pku.edu.cn}).}
}

\begin{document}

\maketitle
\thispagestyle{empty}
\pagestyle{empty}

\begin{abstract}
Multi-robot systems can be extremely efficient for accomplishing team-wise tasks
by acting concurrently and collaboratively.
However, most existing methods either assume static task features 
or simply replan when environmental changes occur.
This paper addresses the challenging problem of coordinating multi-robot systems
for collaborative tasks involving dynamic and moving targets.
We explicitly model the uncertainty in target motion prediction
via Conformal Prediction (CP),
while respecting the spatial-temporal constraints specified by Linear Temporal Logic~(LTL).
The proposed framework (UMBRELLA) combines the Monte Carlo Tree Search~(MCTS) over
partial plans with uncertainty-aware rollouts,
and introduces a CP-based metric to guide and accelerate the search.
The objective is to minimize the Conditional Value at Risk (CVaR) of the average makespan.
For tasks released online, a receding-horizon planning scheme
dynamically adjusts the assignments based on updated
task specifications and motion predictions.
Spatial and temporal constraints among the tasks are always ensured,
and only partial synchronization is required for the collaborative tasks during online execution.
Extensive large-scale simulations and hardware experiments demonstrate substantial reductions in
both the average makespan and its variance by $23\%$ and $71\%$,
compared with static baselines.
\end{abstract}

\section{Introduction}\label{sec:intro}
Recent {advances in computation, perception and communication
enable the deployment of autonomous robots in large, remote and hazardous environments,
such as offshore drilling platforms~\cite{shukla2016application}
and construction sites~\cite{bock2007construction}.
Concurrent motion and actions can greatly improve the team efficiency~\cite{toth2002overview, cliff2015online},
while direct collaboration on a single task further extends system capabilities~\cite{varava2017herding}.}
To specify complex tasks beyond simple sequential visiting,
many studies employ formal languages such as Linear Temporal Logic (LTL)
formulas~\cite{baier2008principles}, as an intuitive yet powerful way to describe
both spatial and temporal requirements on the team behavior~\cite{kantaros2020stylus, guo2018multirobot}.
However, while tasks are commonly defined over static features such as regions and landmarks,
many real-world applications involve dynamic targets, e.g., monitoring animal flocks
or tracking moving vehicles~\cite{robin2016multi}.
These scenarios pose particular challenges for traditional
offline methods that are designed for tasks over static features~\cite{luo2022temporal, sahin2019multirobot, Liu2024fproduct},
as the future motion of targets can significantly impact the performance
and even correctness of the overall plan.

\begin{figure}[!t]
    \centering
	\includegraphics[width=3.4in]{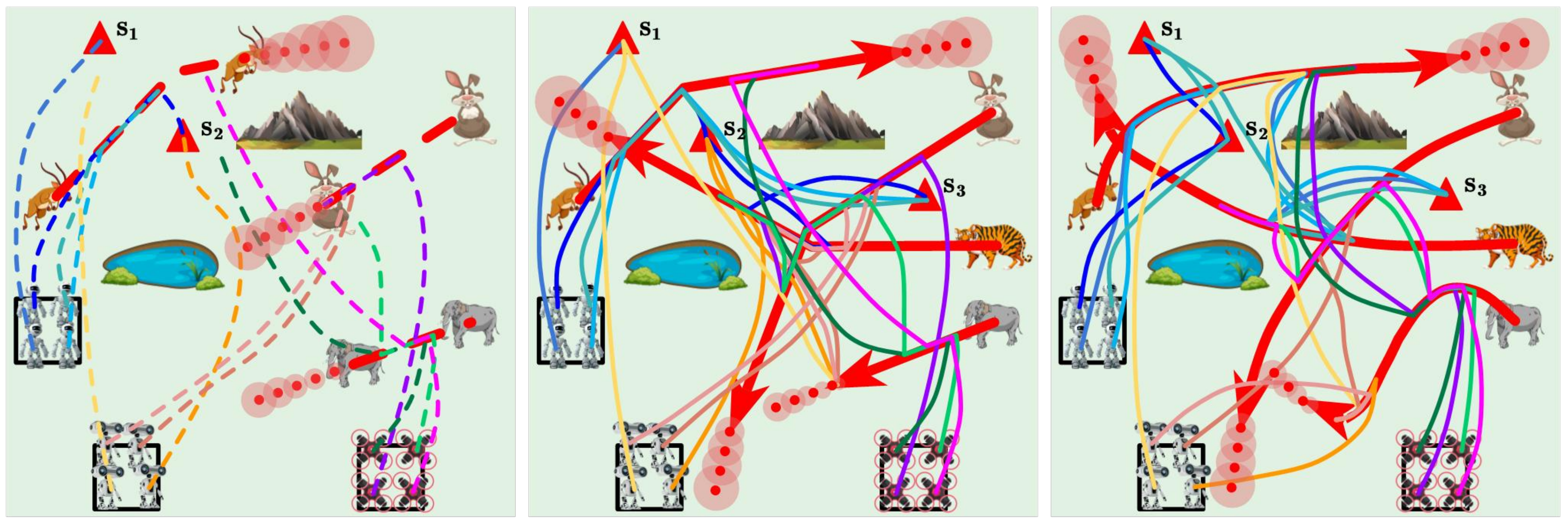}
	\hfil
	\includegraphics[width=3.4in]{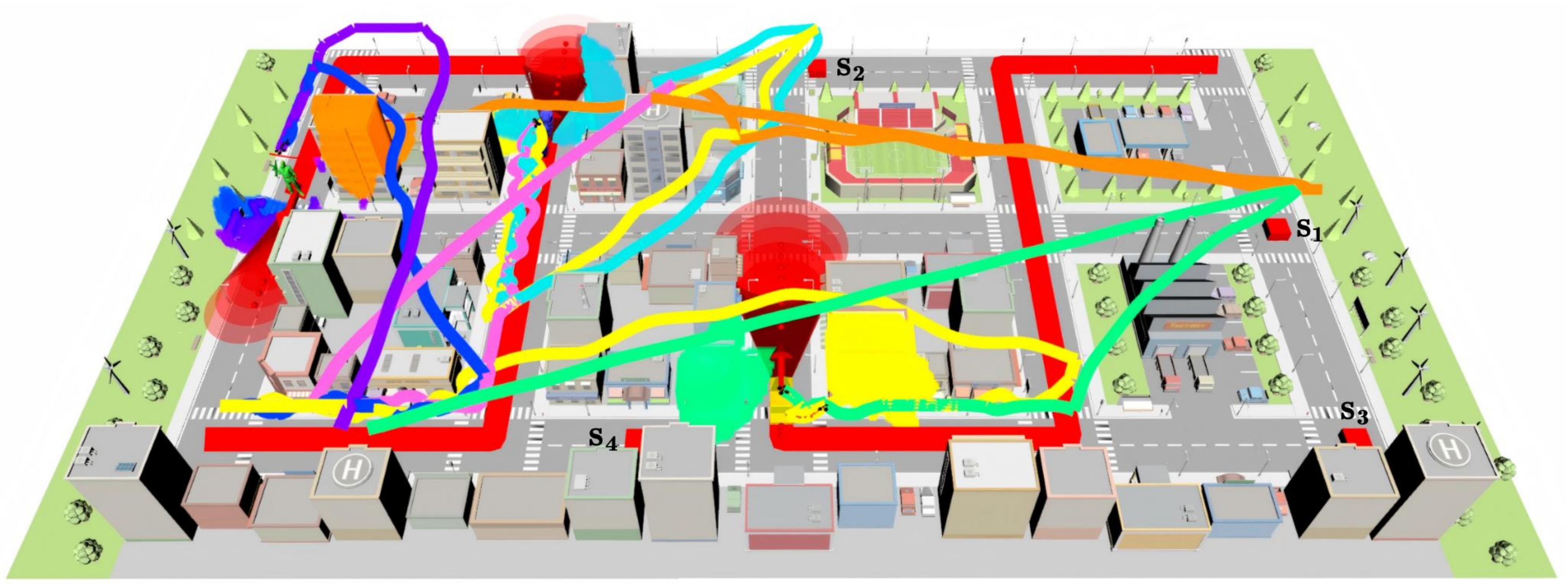}
    \hfil
    \includegraphics[width=3.4in]{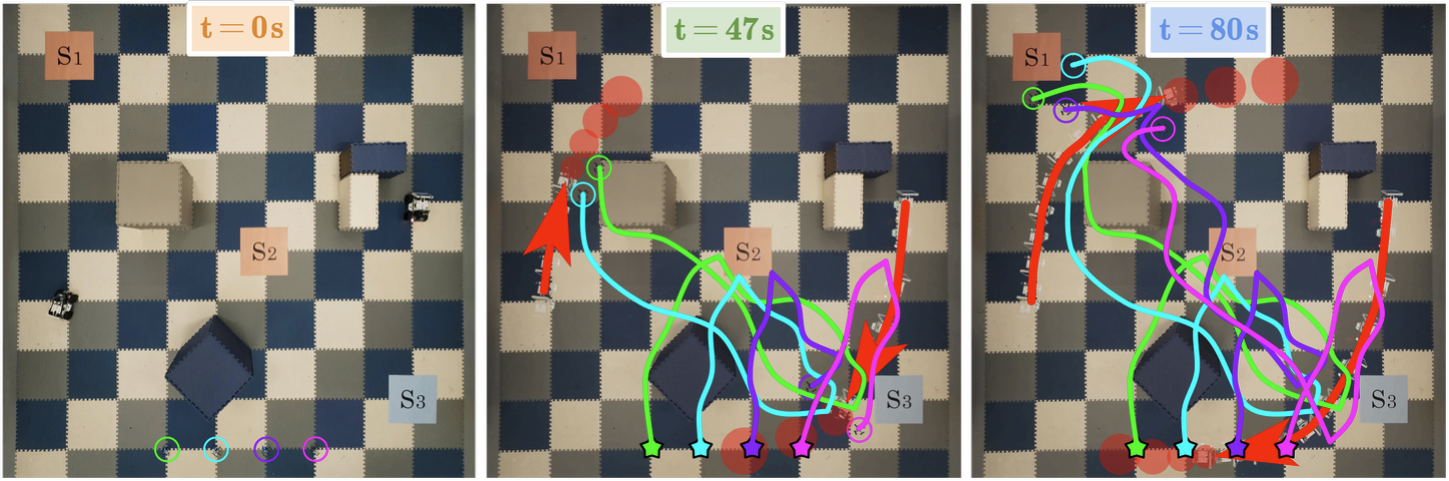}
  \vspace{-0.25in}
    	\caption{\textbf{Top:} Task plans with $12$ robots coordinating to track $4$ dynamic targets
         across $12$ tasks in two scenes (Scene-1: left and middle; Scene-2: right).
          \textbf{Middle:} ROS simulation with $8$ robots and $3$ dynamic targets executing $10$ tasks.
          \textbf{Bottom:} Hardware experiments with $4$ robots and $2$ dynamic targets performing $7$ tasks, 
          showing snapshots at different times.}
    \label{fig:online main}
    \vspace{-0.2in}
\end{figure}
\subsection{Related Work}\label{subsec:intro-related}
Extensive work has addressed task planning for team-wise temporal-logic specifications.
Centralized methods are often used to ensure optimality and completeness.
A sampling-based method in~\cite{kantaros2020stylus}
avoids synchronized products of individual models.
The work in~\cite{schillinger2018simultaneous}
decomposes team tasks into subtasks and assigns them to individual robots.
Works~\cite{luo2022temporal, sahin2019multirobot} 
formulate task constraints and assignments
as integer optimization programs.
However, these works typically assume static propositions and operate offline.

For dynamic workspaces,
\cite{guo2015multi}
performs local replanning for LTL tasks
using updated workspace models.
A reactive planning framework for heterogeneous robots
is introduced in~\cite{Zhang2024},
allowing dynamic adaptation to collaborative temporal logic missions
through local reallocation and path adjustment,
minimizing task violations.
A sampling-based coordination method in~\cite{Sama2023}
addresses environments with uncertain semantic labels and known target motions.
To handle online temporal tasks,
\cite{Liu2024fproduct} proposes computing products
of partially-ordered subtasks and reassigning them after each task update.
However, for tasks defined over dynamically moving features,
the aforementioned methods would mostly plan over the current system state,
neglecting their future motions and more importantly,
the inherent uncertainties.

Furthermore, Conformal Prediction (CP) has recently been used
in robotics to quantify uncertainties in
prediction~\cite{Lars2023CP,tonkens2023,Yu2023CP}.
In~\cite{Lars2023CP}, 
CP is integrated with model predictive control to ensure motion safety
for simple navigation tasks in dynamic environments.
Temporal correlations are predicted using CP in~\cite{tonkens2023}
to enhance long-horizon performance for single-robot navigation.
CP-based trajectory predictors for uncontrollable dynamic agents
are applied to Signal Temporal Logic (STL) control
in~\cite{Yu2023CP}.
However, the planning problem for multi-robot collaborative tasks
involving dynamic and uncertain targets remains an open challenge.

\subsection{Our Method}\label{subsec:intro-our}
This work proposes \textbf{UMBRELLA},
an online planning framework
for multi-robot coordination under dynamic temporal tasks.
It integrates CP-based motion prediction
with MCTS for task assignment.
Predicted target motions and quantified uncertainty
are used after ``Expansion'' to filter child nodes,
and incorporated into ``Simulation''
to evaluate partial plans
under the Conditional Value at Risk (CVaR) metric.
To handle online updates of target motions and newly released tasks, 
a receding-horizon scheme is adopted to dynamically adjust task assignments.
The framework ensures LTL-specified spatial-temporal constraints while accounting for target motion uncertainty,
requiring only partial synchronization during collaborative task execution.
Compared with offline baselines assuming static workspaces or simple periodic replanning,
our method significantly reduces both the mean and variance of the average makespan.

The main contribution is two-fold:
(I) the incorporation of CP-based prediction into online multi-robot coordination under complex
temporal tasks associated with dynamic targets;
(II) a substantial reduction in the average makespan for temporal tasks
specified offline and released online,
with a $23\%$ and $71\%$ decrease in mean and variance, respectively.

\section{Problem Description}\label{sec:problem}

\subsection{Robots and Targets}\label{subsec:robots}
Consider a team of~$N$ autonomous robots operating within a
workspace~$\mathcal{W}\subset \mathbb{R}^{3}$.
The state of robot~$n \in \mathcal{N} \triangleq \{1, \cdots, N\}$  at time \( t \)
is \(x_t \in \mathcal{W}\).
The maximum velocity of robot \(n\) is denoted by~$v_n$.
Each robot $n$ can execute actions from a set~$\mathcal{A}_n$.
Let~$\mathcal{A}\triangleq\bigcup_{n\in\mathcal{N}}\mathcal{A}_n$.
The team is pre-programmed with collaborations~$\mathcal{C}\triangleq \{C_1,\cdots, C_k\}$,
where each collaboration $C_k\in \mathcal{C}$ consists of a list of actions
that must be executed by different robots, denoted by:
$C_k \triangleq \left[a_1,\,a_2,\cdots,a_{\ell_k}\right]$,
where~$\ell_k>0$ is the number of required actions,
and~$a_{\ell}\in \mathcal{A}$ for $\forall \ell=1,\cdots,\ell_k$.
Each action $a_{\ell}\in C_k$ should be performed by a capable robot~$n$,
i.e., $a_\ell\in \mathcal{A}_{n}$.
Each collaboration has a fixed duration~$\rho :\mathcal{C}\rightarrow \mathbb{R}_{+}$.

Moreover, the workspace contains \(M\) dynamic targets with unknown trajectories.
The position of target~\( m \in \mathcal{M} \triangleq \{1, \cdots, M\}\) at time~\( t \) is modeled
as a random variable~\( Y_{t,m} \in \mathbb{R}^{2} \),
with the joint target state~\( Y_t \triangleq \left(Y_{t,1}, \cdots, Y_{t,M}\right) \in \mathbb{R}^{2M} \).
Their trajectories are assumed to follow an unknown distribution~\( \mathcal{D} \),
i.e., \( \left(Y_{0}, Y_{1}, \cdots \right) \sim \mathcal{D} \).
A total of \( \bar{K} \) independent trajectory samples~\( Y^{(i)} \triangleq (Y_0^{(i)}, Y_1^{(i)}, \cdots) \)
are available,
partitioned into a calibration set~\( D_{\texttt{cal}} \triangleq \{ Y^{(1)}, \cdots, Y^{(K)} \} \)
and a training set~\( D_{\texttt{tra}} \triangleq \{ Y^{(K+1)}, \cdots , Y^{(\bar{K})} \} \).
The distribution~$\mathcal{D}$ is independent of robot behaviors,
i.e., it does not depend on the team state~$x$.
  Perfect observations of target positions and velocities are continuously available,
  denoted by~$Y_{0:t} \triangleq \left(Y_{0}, \cdots, Y_{t}\right)$
  and~$V_{0:t} \triangleq \left(V_{0}, \cdots, V_{t}\right)$,
  where~$V_{t} \triangleq \left(V_{t,1}, \cdots, V_{t,M}\right)$.
  A centralized scheme aggregates all measurements
  and fuses them into real-time estimates of target states.
  For simplicity, static task regions and obstacles are modeled as targets with zero velocity.

\subsection{Task Specification}\label{subsec:task-specification}
Consider two types of atomic propositions:
(I) $p^m_n$ is \textit{true} if the distance between
robot $n \in \mathcal{N}$ and target $m \in \mathcal{M}$ is
below a given threshold.
Let $\textbf{p}\triangleq \{p^m_n,\, \forall m \in \mathcal{M},
n \in \mathcal{N}\}$;
(II) $c_k^{m}$ is \textit{true} if collaboration~$C_k$
is executed on target~\(m\).
Let $\textbf{c} \triangleq\{c_k^{m}, \forall m \in \mathcal{M},
\forall C_k \in \mathcal{C}, \forall a^c \in C_k\}$.
Thus, the complete set of propositions is denoted by~$AP \triangleq \textbf{p} \cup \textbf{c}$.

Given these propositions, 
a team-wise \emph{task} is represented as a {syntactically co-safe} LTL (sc-LTL) formula: 
$\varphi_{i} = \text{sc-LTL}(AP)$.
The overall task specification is represented as a set of LTL formulas,
which consists of two parts:
\begin{equation}\label{eq:reactive-task}
  \varphi \triangleq \varphi_\texttt{static}
  \bigwedge_{\bar{e}\in \bar{E}} \Box \left( \varphi_\texttt{obs}^{\bar{e}}
  \rightarrow \Diamond\varphi_\texttt{rep}^{\bar{e}} \right),
\end{equation}
where~\(\varphi_\texttt{static}\) is a predefined set of sc-LTL formulas;
$\bar{E}$ is the set of events triggered by online observations;
$\varphi_\texttt{obs}^{\bar{e}}$ is the propositional condition for event~$\bar{e}\in \bar{E}$;
and~$\varphi_\texttt{rep}^{\bar{e}}$ is the corresponding response task, also specified as sc-LTL.
Specifically, if $\varphi_\texttt{obs}^{\bar{e}}$ holds,
then $\varphi_\texttt{rep}^{\bar{e}}$ must eventually be satisfied.
We adopt the standard LTL syntax~\cite{baier2008principles}:
$\varphi \triangleq \top \;|\; p  \;|\; \varphi_1 \wedge \varphi_2  \;|
\; \neg \varphi  \;|\; \bigcirc \varphi  \;|\;  \varphi_1 \,\textsf{U}\, \varphi_2,$
where $\top\triangleq \texttt{True}$, $p \in AP$, $\bigcirc$ (\textit{next}),
$\textsf{U}$ (\textit{until}) and $\bot\triangleq \neg \top$.
sc-LTL~\cite{belta2017formal} is a fragment of LTL restricted to operators~\( \bigcirc \), $\textsf{U}$,
and $\Diamond$ (eventually), written in positive normal form without the
negation operator $\neg$ preceding temporal operators.

Moreover, the task plan for the robot team is defined as
\begin{equation}\label{plan}
    \Pi_{\mathcal{N}} \triangleq (\pi_1, \pi_2, \cdots, \pi_N),
\end{equation}
where \(\pi_n \triangleq (t^1_n,a^1_n,m^1_n)(t^2_n, a^2_n,m^2_n)
\cdots (t^{K_n}_n, a^{K_n}_n,m^{K_n}_n) \)
is the timed sequence of actions and corresponding targets for robot \(n\in \mathcal{N}\).
Each action \(a^k_n \in \mathcal{A}_n\) is executed on target \(m^k_n\in \mathcal{M}\)
at time~$t^k_n>0$,
where $k \in \mathcal{K}_n\triangleq \{1,\cdots, K_n\}$.
Given~$\Pi_{\mathcal{N}}$, the induced trace is given by the sequence of propositions
satisfied by the actions, i.e.,
\(w_\Pi = \sigma_1 \sigma_2 \cdots \sigma_L\)
where~\( \sigma_\ell \in 2^{\textit{AP}} \).
The language of~\( \varphi \) is
$\mathcal{L}_\varphi \triangleq \{\boldsymbol{w}\,|\,w\models\varphi\}$,
where $\models$ is the satisfaction relation.
Since sc-LTL formulas admit satisfaction by finite traces~\cite{belta2017formal,baier2008principles},
the plan satisfies \(\varphi\) if $w_\Pi\in \mathcal{L}_\varphi$, denoted by \(\Pi_\mathcal{N} \models \varphi\).
The \emph{makespan}~$T_\varphi$ is defined as the minimal time to generate a trace satisfying~$\varphi$.
As the tasks assigned to the robot system consist of initially issued tasks
and online triggered tasks during execution,
for a sequence of tasks $\varphi(t) \triangleq \{ \varphi_1, \ldots, \varphi_L \}$ received up to time~$t$, 
the \emph{average makespan} is defined as:
$\overline{T}_\varphi \triangleq \frac{1}{L} \sum_{\ell = 1}^{L} T_{\varphi_\ell}.$


\subsection{Problem Statement}\label{subsec:prob-statement}

Given task formulas~$\varphi(t)$ and observations~$\{Y_{0:t}, V_{0:t}\}$,
the overall objective is to synthesize the team-wise plan~\(\Pi_\mathcal{N}\)
to satisfy~\(\varphi(t)\) and minimize
the average makespan~\(\overline{T}_{\varphi}\).
\begin{example}\label{exp:task}
Consider a fleet of UAVs and ground robots (GRs) deployed for wildlife monitoring
in a workspace labeled with \texttt{poacher}, \texttt{trap} and \texttt{animal}.
An offline collaborative task~$\varphi_\texttt{1}$ is specified as:
\begin{equation}
\label{eq:task}
\begin{aligned}
    \varphi_\texttt{1} = \varphi_{\texttt{p}-\texttt{s}_1} \wedge \varphi_{\texttt{mf}-\texttt{a}}, \:
    \varphi_{\texttt{p}-\texttt{s}_1} = \Diamond \texttt{patrol}_{\texttt{s}_1},\\
    \varphi_{\texttt{mf}-\texttt{a}} = \Diamond(\texttt{monitor}_\texttt{a}\wedge\neg\texttt{film}_\texttt{a}\wedge\Diamond\texttt{film}_\texttt{a}), \\
\end{aligned}
\end{equation}
which requires to patrol region \(\texttt{s}_1\),
monitor and film antelopes in sequence.
GRs must avoid obstacles in the workspace.
The reactive protocol specifies:
$(\texttt{poacher}\rightarrow\Diamond \texttt{arrest})$,
where an ``arrest'' task is triggered upon detecting a poacher,
and $(\texttt{trap}\wedge\texttt{animal}\rightarrow\Diamond \texttt{rescue})$,
triggers a ``rescue'' task when an animal is trapped.
\hfill $\blacksquare$
\end{example}

\section{Proposed Solution}\label{sec:solution}
\begin{figure*}[!t]
  \centering
  \includegraphics[width=7.0in]{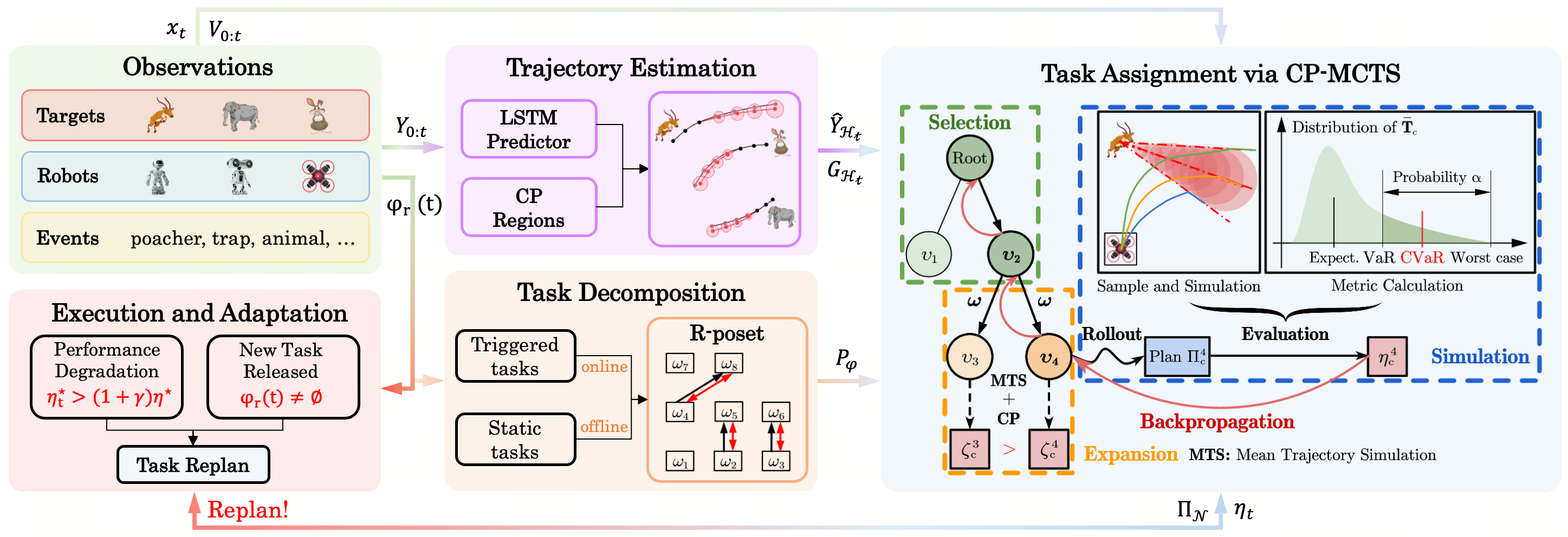}
  \vspace{-6mm}
  \caption{Overview of the proposed framework, 
  consisting of four main components: 
  (i) trajectory estimation via LSTM and CP, 
  (ii) task decomposition into an R-poset, 
  (iii) CP-MCTS for uncertainty-aware assignment, 
  and (iv) online execution and receding-horizon adaptation. 
  In the R-poset illustration, 
  precedence and mutual-exclusion relations are marked by black and red arrows, 
  respectively.}
  \label{fig:fra}
  \vspace{-0.2in}
\end{figure*}
As illustrated in Fig. \ref{fig:fra}, the proposed solution consists of four components:
the dynamic target trajectory estimation in Sec. \ref{subsec: trajectory};
the derivation of the relaxed partially-ordered set (R-poset)
in Sec. \ref{subsec: poset};
the uncertainty-aware task assignment via CP-MCTS in Sec. \ref{subsec: assignment};
and 
a receding-horizon planning scheme for real-time adaptation
in Sec. \ref{subsec: execution}.

\subsection{Trajectory Estimation of Dynamic Targets}\label{subsec: trajectory}

\subsubsection{Trajectory Predictor}
Given a task horizon~$T_{{\varphi}}$ and the history of target observations $Y_{0:t}$,
we train an independent trajectory predictor for each target
based on the training dataset~$D_{\texttt{tra}}$.
Let $D_{\texttt{tra}}^m \subset D_{\texttt{tra}}$ collect trajectories of target~$m$,
$Y_{0:T,m}^{(i)} \triangleq (Y_{0,m}^{(i)}, \cdots, Y_{t,m}^{(i)}, Y_{t+1,m}^{(i)}, \cdots, Y_{T_{{\varphi}},m}^{(i)})$
denotes the $i$-th trajectory in $D_{\texttt{tra}}^m$.
The trajectory predictor is defined as
$\Upsilon_m : \mathbb{R}^{2(t+1)} \rightarrow \mathbb{R}^{2(T_{{\varphi}}-t)}$ that estimates the future states of target $m$
as $\widehat{Y}_{\mathcal{H}_t,m} \triangleq \Upsilon_m (Y_{0:t,m}),
\mathcal{H}_t \triangleq \{t+1, \cdots, T_{{\varphi}}\}$, where
$\Upsilon_m (Y_{0:t,m}) \triangleq (\widehat{Y}_{t+1 \mid t,m}, \cdots, \widehat{Y}_{T_{{\varphi}} \mid t,m})$.
Let $\mathbf{\Upsilon} \triangleq (\Upsilon_1, \cdots, \Upsilon_M)$.
In principle, any trajectory predictor can be employed,
including long short-term memory (LSTM) networks~\cite{Hochreiter1997LSTM}, recurrent neural networks (RNN)~\cite{rnn}, and gated recurrent units (GRU)~\cite{gru}.
In this work, a LSTM network is adopted for each target and trained
by minimizing the following loss function:
$$\underset{\Upsilon_m}{\textbf{min}} \frac{1}{|D_{\texttt{tra}}^m|} \sum_{i=1}^{|D_{\texttt{tra}}^m|} \left\| Y_{\mathcal{H}_t,m}^{(i)} - \Upsilon_m(Y_{0:t,m}^{(i)}) \right\|^2,$$
which is the Mean Squared Error (MSE) over the predicted trajectory for target $m$.
Stacking per-target predictions gives
$\widehat{Y}_{\mathcal{H}_t} \triangleq (\widehat{Y}_{\mathcal{H}_t,1}, \cdots, \widehat{Y}_{\mathcal{H}_t,M})$.
Furthermore, based on the velocity measurements~$V_{0:t}$,
we define \(v^{\ast}_{m}(t) \triangleq \max \left\{ |V_{\varsigma , m}| : \varsigma \leq t \right\}\)
as the maximum historical velocity of target \(m\) up to time \(t\).
This velocity bound serves as an input parameter for the subsequent planning module.

\subsubsection{Conformal Prediction Regions}\label{subsec:cpr}
The CP framework can construct regions around predicted trajectories
that contain the true trajectory with high probability,
see~\cite{Angelopoulos2021CP} for detailed descriptions.
More specifically, we adopt the method in~\cite{Lars2023CP} to construct valid prediction regions
for each target independently.
Given observations $Y_{0:t}$ at time $t$,
the trajectory predictors~$\mathbf{\Upsilon}$ generate predictions~$\widehat{Y}_{\mathcal{H}_t}$
for the task horizon $T_{{\varphi}}$.
For each target $m \in \mathcal{M}$, given a failure probability~$\delta \in (0,1)$,
the prediction regions
$G_{\mathcal{H}_t,m} \triangleq (G_{t+1 \mid t,m}, \cdots, G_{T_{\varphi} \mid t,m})$
are constructed such that:
\begin{equation}
\label{eq:cpg}
\text{Pr}\Big(\left\| Y_{h, m} - \widehat{Y}_{h \mid t, m} \right\| \leq G_{h \mid t, m}, \Big.\Big. 
\forall h \in \mathcal{H}_t \Big) \geq 1 - \delta,
\end{equation}
where \(G_{h \mid t, m}\) denotes the \(h\)-step prediction error for target~$m$ at time~$t$.
The \textit{nonconformity score} is defined as
\begin{equation}
\label{eq:ncs}
R_{h|t, m}^{(i)} \triangleq \left\| Y_{h, m}^{(i)} - \widehat{Y}_{h|t, m}^{(i)} \right\|,
\end{equation}
for calibration trajectories \( Y^{(i)} \in D_{\texttt{cal}}^m \),
where $D_{\texttt{cal}}^m$ is the calibration dataset for target $m$.
Specifically, predictions~$\widehat{Y}_{h|t, m}^{(i)}$ are computed
for each $Y^{(i)} \in D_{\texttt{cal}}^m$,
and the corresponding nonconformity scores $R_{h|t, m}^{(i)}$ are calculated.
These scores are sorted in non-decreasing order after adding $R_{h|t, m}^{(|D_{\texttt{cal}}^m|+1)} \triangleq \infty$.
The error bound~$G_{h|t, m}$ is then chosen as the $p$-th smallest score, where $p = \lceil (|D_{\texttt{cal}}^m|+1)(1-\bar{\delta}) \rceil$.
Finally, the prediction regions for all targets are aggregated as:
$G_{\mathcal{H}_t} \triangleq (G_{h \mid t,1}, \cdots, G_{h \mid t,M})$.



\subsection{R-posets for Task Formulas}\label{subsec: poset}
 To efficiently capture the temporal constraints embedded 
 in a sc-LTL formula~$\varphi$, 
 we adopt the notion of relaxed partially ordered sets (R-posets) 
 as proposed in~\cite{LIU2024111377}. 

 Given $\varphi$,
 we first translate it into a Nondeterministic Büchi Automaton (NBA)
 $\mathcal{A}_\varphi = (S,\,\Sigma,\,\delta,\,S_0,\,S_F)$,
 where~$S$ is the set of states;
 $\Sigma=AP$ is the alphabet;
 $\delta:S\times \Sigma\rightarrow2^{S}$ is the transition relation;
 $S_0, S_F\subseteq S$ denote the initial and accepting states.
 Along a satisfying run of $\mathcal{A}_\varphi$, 
 a \textit{subtask}~$\omega$ is defined as
 the minimal symbol enabling a transition between states.
 Each subtask represents a unit of progress,
 and the collection of all subtasks forms~$\Omega_\varphi$.

\begin{definition}[R-poset]
	An R-poset over $\varphi$ is defined as the triple:
	$P_\varphi=(\Omega_\varphi,\preceq_\varphi,\neq_\varphi)$:
	(I) $\Omega_\varphi$ is the set of subtasks;
    (II) $\preceq_{\varphi}\subseteq \Omega_{\varphi} \times \Omega_{\varphi}$ 
	is the precedence relation:
	if $(\omega_1, \omega_2) \in \preceq_\varphi$, then $\omega_2$ cannot start before $\omega_1$ starts;
	(III) $\neq_\varphi \subseteq 2^{\Omega_\varphi}$ is the mutual exclusion relation: 
	subtasks in the same set cannot be executed simultaneously.
\hfill $\blacksquare$
\end{definition}

Although an R-poset is not unique for a given $\varphi$, the set of all possible R-posets \(P_{\varphi}\) is as expressive as the original NBA.
Any plan consistent with an R-poset must satisfy $\varphi$, since $\text{Words}(P_{\varphi}) \subset \text{Words}(\varphi)$.
In practice, we construct $P_{\varphi}$ using the algorithm in~\cite{LIU2024111377},
denoted as \(\texttt{Compute\_poset}(\cdot)\).
An example of an R-poset is illustrated in Fig.~\ref{fig:fra}.


\subsection{Uncertainty-Aware Task Assignment}\label{subsec: assignment}
Given the estimation of target motions~\((\widehat{Y}_{\mathcal{H}_t}, G_{\mathcal{H}_t})\)
and the final R-poset~$P_{\varphi}=(\Omega_{\varphi},\preceq_{\varphi},\neq_{\varphi})$,
the objective is to find an efficient assignment of
all subtasks in~$\Omega_{\varphi}$ given the robot team~$\mathcal{N}$ such that
all partial orders in~$\preceq_{\varphi},\, \neq_{\varphi}$ are respected
and the average makespan of all tasks \(\overline{T}_{\varphi}\) is minimized.


\subsubsection{CP-based Monte Carlo Tree Search}\label{sec:cp-mcts}

Monte Carlo Tree Search (MCTS) is a well-known heuristic search algorithm
for solving complex planning problems in dynamic scenes.
Built upon this algorithm, this work introduces CP-based Monte Carlo Tree Search (CP-MCTS).
As summarized in Alg.~\ref{alg:cp-mcts} and Fig.~\ref{fig:fra},
it is a centralized task planning algorithm
designed to efficiently
handle complex temporal tasks associated with dynamic targets.
It repeats four stages until the time budget expires:
selection, expansion, simulation, and backpropagation.
Notably, as the number of robots and tasks increases,
the nodes generated during the ``Expansion'' phase grow rapidly.
If ``{Simulation}'' is performed for all expanded nodes,
the algorithm becomes biased toward breadth-first search, neglecting depth exploration and thus reducing efficiency.
To mitigate this, a CP-based metric is introduced to
efficiently evaluate and select the most promising child nodes,
which are then advanced to the ``Simulation'' stage.

\subsubsection{Selection and Expansion}
Each node in the search tree represents a partial assignment of subtasks, i.e.,
$\nu \triangleq (\tau_1,\,\tau_2,\cdots,\tau_N)$,
where $\tau_n$ is the ordered sequence of subtasks assigned
to robot~$n\in \mathcal{N}$.

\setlength{\textfloatsep}{5pt}
\begin{algorithm}[!t]
    \caption{CP-based MCTS \texttt{(CP-MCTS)}}
    \label{alg:cp-mcts}
    \SetKwInOut{Input}{Input}
    \SetKwInOut{Output}{Output}
    \Input{Robots~$\mathcal{N}$, poset~$P_{\varphi}$, duration func.~$\rho$, \\
	time budget~$t_b$, target estimations~$\widehat{Y}_{\mathcal{H}_t}, {G}_{\mathcal{H}_t}$.}
    \Output{Plan~$\Pi_\texttt{c}^\star$, average makespan~$\eta_\texttt{c}^\star$.}
    Initialize root node $\nu_0$, \(\eta_\texttt{c}^\star \leftarrow \infty\);\\
    \While{$time < t_b$}{
    	Leaf node~$\nu \leftarrow$ \texttt{Selection}($\nu_0$);\\
	Child nodes $\{ \nu_{+}\}$ $\leftarrow$ \texttt{Expansion}($\nu,\, P_{\varphi}$);\\
    Filtered nodes $\{ \nu_\texttt{s}\}$ via $\zeta$ in (\ref{eq:metric});\\
	\For{$\nu_\texttt{s} \in \{\nu_\texttt{s}\}$}
	{\tcc{Simulation\,=\,Rollout\,+\,Eval}
    $\Pi_\texttt{c} \leftarrow$ \texttt{Rollout}($\nu_\texttt{s},\, P_{\varphi},\, \rho,\, \widehat{Y}_{\mathcal{H}_t},\, x_t$);\\
	$\eta_\texttt{c} \leftarrow \texttt{Eval}(\Pi_\texttt{c},\, P_{\varphi},\, \rho,\, \hat{Y}_{\mathcal{H}_t},\, G_{\mathcal{H}_t},\, x_t)$;\\
	\If{$\eta_\texttt{c} < \eta_\texttt{c}^\star$}
	{$\eta_\texttt{c}^\star \leftarrow \eta_\texttt{c},\, \Pi_\texttt{c}^\star \leftarrow \Pi_\texttt{c}$;\\}
	\texttt{Backpropagate}($\nu_\texttt{s}, \, \xi_\texttt{c}$);\\
	}
    }
\end{algorithm}

During~``{Selection}'',
the Upper Confidence Bound applied to Trees (UCT) \cite{2006Bandit} is used to choose
nodes for ``{Expansion}''.
The UCT value is computed as
$(\overline{\xi}_i+Q\times\sqrt{\frac{ln B}{b_i}})$,
where \(\overline{\xi}_i\) is the estimated value of the \(i\)-th child node,
\(b_i\) is its visit count,
\(B\) is the visit count of the current node,
and \(Q>0\) is a parameter that balances exploration and exploitation.
Starting from the root, at each level the child with the highest UCT value 
is recursively selected until a leaf node is reached.

During~``{Expansion}'', unless a terminal state with a complete assignment is reached,
child nodes are generated from the leaf node by assigning the \textit{next} subtask
to the robot team.
Let $\Omega_{\nu}\triangleq \{\omega\in\tau_n,\,\forall n\in \mathcal{N}\}$ be
the set of subtasks already assigned in node~$\nu$,
and $\Omega^-_{\nu} \triangleq \Omega_{\varphi} \backslash \Omega_{\nu}$
the remaining ones.
To ensure feasibility,
the \textit{next} subtask~$\omega$ is chosen from 
$\Omega^{\texttt{a}}_{\nu} \triangleq \big{\{}\omega\, |\, \omega \in \Omega^-_{\nu}, \omega' \in \Omega_\nu, \forall \omega' \in \text{Pre}(\omega)\big{\}}$,
where 
$\text{Pre}(\omega)$ represents the set of subtasks that must be completed
before~$\omega$ according to R-poset~$P_{\varphi}$.
In other words, a subtask~$\omega_i$ cannot be assigned to node~$\nu$
if some $\omega_j$ with $(\omega_j, \omega_i) \in \preceq_{\varphi}$
has not yet been assigned.
When a subtask~$\omega \in \Omega^{+}_{\nu}$ is selected,
a child node $\nu^+$ is created by assigning $\omega$ to a robot group~$\mathcal{I}_{\omega}$,
i.e., by appending it to the local plan~$\tau_n$ of each robot~$n \in \mathcal{I}_{\omega}$.
For child node~$\nu^+$, define \emph{key subtasks} $\Omega^{\texttt{s}}_{\nu^+}\subset \Omega_{\nu^+}$ as:
$\Omega^{\texttt{s}}_{\nu^+}\triangleq \big{\{} \omega\,|\,\omega=\tau_{n}[0],
\, \omega \notin \tau_{n'}[1:],\, \forall n,n' \in \mathcal{N}\big{\}}$,
where~$\Omega^{\texttt{s}}_{\nu^+}$ contains the first-to-execute subtasks across robots,
independent of others.
Stepwise simulation is then performed based on
the predicted target trajectories~\(\widehat{Y}_{\mathcal{H}_t}\),
as described in the sequel,
resulting in the predicted completion time~$\widehat{T}_{\omega}$
for each assigned subtask~$\omega \in \Omega_{\nu^+}$.


\subsubsection{Simulation}
The ``{Simulation}'' procedure consists of three steps:
(I) \textit{rollout} to complete an assignment;
(II) sample the makespan distribution;
(III) evaluate the plan based on this distribution.
First, a \textit{rollout} policy is applied recursively from the selected child node
until all subtasks are assigned.
In each iteration, the \textit{next} subtask is selected as in ``Expansion''
and assigned to a robot group either randomly 
or greedily according to stepwise simulation results.
To enhance \textit{rollout} diversity,
we use a random factor \(\epsilon \in [0, 1]\): 
with probability \(\epsilon\), 
a feasible robot group is chosen uniformly at random;
with probability \(1-\epsilon\),
the group expected to initiate this subtask earliest is selected.
Once a complete plan~\(\Pi_\texttt{c}\) is obtained, 
a sampling-based method derives its average makespan distribution~\(\mathbf{\overline{T}}_\texttt{c}\)
via stepwise simulation with \(z\in \mathbb{N}\) samples drawn
from the prediction regions \(\widehat{Y}_{\mathcal{H}_t}, G_{\mathcal{H}_t}\).
\begin{definition}[VaR and CVaR]
The value at risk (VaR) at risk level \(\alpha \in (0,1]\)
is defined as $\text{VaR}_{\alpha}(\Pi_\texttt{c}) \triangleq
      {\textbf{inf}}\big\{\rho\in\mathbb{R},\, \text{Prob}(
      \mathbf{\overline{T}}_\texttt{c}\geq\rho)\geq\alpha\big\}$,
i.e., the \(\alpha\)-quantile of the distribution \(\mathbf{\overline{T}}_\texttt{c}\).
The associated conditional value at risk (CVaR) is defined as:
$\text{CVaR}_\alpha(\Pi_\texttt{c})
\triangleq \mathbb{E}\big{[}\mathbf{\overline{T}}_\texttt{c}|
  \mathbf{\overline{T}}_\texttt{c}\geq\mathrm{VaR}_\alpha(\Pi_c)\big{]}$,
as the expected value of the worst \(\alpha\)-quantile of \(\mathbf{\overline{T}}_\texttt{c}\).
\hfill $\blacksquare$
\end{definition}

As shown in Fig.~\ref{fig:fra},
VaR captures the maximum loss at a given confidence level,
while CVaR assesses the expected loss beyond the VaR threshold,
providing a more informative measure of tail risk.
Accordingly, we use \(\eta_\texttt{c}\triangleq \text{CVaR}_{\alpha}(\Pi_\texttt{c})\)
as the evaluation metric for each plan based on its average makespan distribution.
The procedure of deriving the distribution and computing the CVaR
is encapsulated as~\(\texttt{Eval}(\cdot)\).
During the search, the best plan~$\Pi_\texttt{c}^\star$
and its minimal risk value~$\eta_\texttt{c}^\star$ are maintained.
For each new candidate~$\Pi_\texttt{c}$,
if \(\eta_\texttt{c} < \eta_\texttt{c}^\star\),
both $\Pi_\texttt{c}^\star$ and $\eta_\texttt{c}^\star$ are updated.

\subsubsection{Backpropagation}
To support the tree search, we define a normalized performance measure:
\begin{equation}
\xi_\texttt{c} \triangleq 2 - \frac{\eta_\texttt{c}}{\eta_\texttt{c}^\star},
\end{equation}
which facilitates comparison across different branches of the search tree.
During ``{Backpropagation}'',
the evaluation values and visit counts of nodes are propagated and
updated along the path from the selected node to the root.

\begin{lemma}
\textnormal{
Given an expanded node~$\nu^+$,
the completion times of \textit{key subtasks} in $\Omega^{\texttt{s}}_{\nu^+}$
satisfy that
$$\text{Pr}\Big(T_{\omega} \leq \widehat{T}_{\omega} + \frac{G_{\widehat{T}_{\omega}, m}}
       {\underset{n \in \mathcal{I}_{\omega}}{\textbf{min}}\{v_n\}-v^{\star}_{m}}, \Big.
\Big. \forall \omega \in \Omega^{\texttt{s}}_{\nu^+} \Big) \geq (1-\delta)^{|\Omega^{\texttt{s}}_{\nu^+}|},$$
where $T_{\omega}$ is the actual completion time of subtask~$\omega\in \Omega^{\texttt{s}}_{\nu^+}$;
$m$ is the associated target;
$\mathcal{I}_{\omega}$ is the set of robots executing~$\omega$;
$v_{n}$ and $v^{\star}_{m}$ are the velocities of robot $n$ and target $m$.}
\end{lemma}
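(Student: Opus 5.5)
The plan is to first prove a per-subtask bound and then combine the key subtasks through independence of the targets' CP events.

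\emph{Single key subtask.} Fix $\omega\in\Omega^{\texttt{s}}_{\nu^+}$ with associated target $m$ and robot group $\mathcal{I}_\omega$. By the definition of key subtasks, $\omega$ is the first entry $\tau_n[0]$ of every robot $n\in\mathcal{I}_\omega$ and appears nowhere later in any local plan. So its execution depends only on the robots' current states $x_t$ and the motion of target $m$, not on the completion of any other assigned subtask. The stepwise simulation uses the predicted trajectory $\widehat{Y}_{\cdot\mid t,m}$ and returns $\widehat{T}_\omega$, the time at which all robots in $\mathcal{I}_\omega$ can reach the predicted position $\widehat{Y}_{\widehat{T}_\omega\mid t,m}$ (plus the fixed duration $\rho$, which is identical in the real and predicted executions).

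\emph{Pursuit bound.} Condition on the CP event $E_m$ of \eqref{eq:cpg} for target $m$. On $E_m$, at time $\widehat{T}_\omega$ the true target satisfies $\|Y_{\widehat{T}_\omega,m}-\widehat{Y}_{\widehat{T}_\omega\mid t,m}\|\le G_{\widehat{T}_\omega\mid t,m}$. Each robot $n\in\mathcal{I}_\omega$ is then at the predicted point, within distance $G_{\widehat{T}_\omega,m}$ of the true target. It pursues the target at speed $v_n$ while the target moves at speed at most $v^\star_m$, so the gap closes at rate at least $v_n-v^\star_m\ge \min_{n\in\mathcal{I}_\omega}v_n-v^\star_m>0$. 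Capture therefore occurs within $G_{\widehat{T}_\omega,m}/(\min_n v_n-v^\star_m)$ additional time. The slowest robot in the group determines when the collaboration can start, which is why the minimum appears. This gives $T_\omega\le \widehat{T}_\omega+G_{\widehat{T}_\omega,m}/(\min_n v_n - v^\star_m)$ on $E_m$, and hence with probability at least $1-\delta$.

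\emph{Combining.} The CP regions are built independently per target, each target has its own predictor and calibration set, and $\mathcal{D}$ does not depend on robot behavior. Treat the events $E_m$ for the relevant targets as independent, so $\Pr(\bigcap_\omega E_{m(\omega)})\ge\prod(1-\delta)$. If two key subtasks share a target, they share a single event, which only improves the bound since $(1-\delta)^k$ decreases in $k$. This yields the stated $(1-\delta)^{|\Omega^{\texttt{s}}_{\nu^+}|}$.

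\emph{Main obstacle.} The pursuit step needs the following assumptions, which I would state explicitly:
\begin{itemize}
\item $\min_n v_n>v^\star_m$;
\item $v^\star_m$ bounds the target's speed over the relevant interval, even though it is only a historical maximum;
\item free-space straight-line pursuit, so obstacles are ignored or handled by the simulator's metric;
\item the timing index of the CP bound at $\widehat{T}_\omega$ lies within $\mathcal{H}_t$.
\end{itemize}
The independence across targets is the other delicate point. I would justify it from the independent per-target modelling, or otherwise fall back to a union bound giving $1-|\Omega^{\texttt{s}}_{\nu^+}|\delta$.
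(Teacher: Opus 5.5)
Your proposal is correct and follows essentially the same route as the paper. In the stepwise simulation, the robots reach the predicted position $\widehat{Y}_{\widehat{T}_\omega,m}$ at time $\widehat{T}_\omega$; the CP guarantee \eqref{eq:cpg} bounds the true target's offset by $G_{\widehat{T}_\omega,m}$; and the pursuit argument gives the extra delay $G_{\widehat{T}_\omega,m}/(\min_{n\in\mathcal{I}_\omega}v_n - v^\star_m)$. The assumptions you state explicitly are exactly what the paper's proof uses implicitly when it asserts $(1-\delta)^{|\Omega^{\texttt{s}}_{\nu^+}|}$: independence of the CP events across targets, $v^\star_m$ bounding future target speed, and $\widehat{T}_\omega\in\mathcal{H}_t$. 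Your handling of shared targets through the single joint-in-$h$ event $E_m$ is a clean refinement of that step.
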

\begin{proof}
From the simulation results,
all robots $n \in \mathcal{I}_{\omega}$ assigned to subtask $\omega \in \Omega^{\texttt{s}}_{\nu^+}$
would reach the predicted position
$\widehat{Y}_{\widehat{T}_{\omega}, m}$ of target~$m$
by time instance~$\widehat{T}_{\omega}$.
By the probability guarantee in~\eqref{eq:cpg}, it holds that
$$\text{Pr}\Big(\left\| Y_{\widehat{T}_{\omega}, m} - \widehat{Y}_{\widehat{T}_{\omega}, m} \right\|
\leq G_{\widehat{T}_{\omega}, m}, \Big.
\Big. \forall \omega \in \Omega^{\texttt{s}}_{\nu^+} \Big)
\geq (1 - \delta)^{|\Omega^{\texttt{s}}_{\nu^+}|}.$$
The additional delay required to reach target~$m$ is upper-bounded by
$\widehat{T}_\omega^{\texttt{e}} = {G_{\widehat{T}_\omega, m}}
/({{\textbf{min}_{n \in \mathcal{I}_{\omega}}}\{v_n\} -v^{\star}_{m}})$,
this completes the proof.
\end{proof}


The actual completion time of a first executed subtask~\(T_{\omega}\)
is probabilistically bounded by the predicted time \(\widehat{T}_{\omega}\)
plus an uncertainty term determined by the prediction region~\(G_{\widehat{T}_{\omega}, m}\)
and the velocity difference between the slowest robot and the target.
Hence, with confidence level \((1-\delta)\),
each \textit{key subtask} completion time can be reliably estimated.
The evaluation metric for a child node~$\nu^+$ is then defined as:
\begin{equation}
\label{eq:metric}
\zeta_{\nu^+} \triangleq \frac{\sum_{i=1}^{|\Omega_{\nu^+}|}\widehat{T}_{\omega_i}  + \sum_{j=1}^{|\Omega^{\texttt{s}}_{\nu^+}|} \widehat{T}_{\omega_j}^\texttt{e} }{|\Omega_{\nu^+}|},
\end{equation}
which approximates the average completion time of assigned subtasks with uncertainty adjustment.
After ``Expansion'', for each \textit{next} subtask, the child node with the smallest~$\zeta$ is selected to enter ``{Simulation}'',
forming the filtered set~$\{\nu_\texttt{s}\}$.

\begin{remark}\label{rm:velocity}
For a child node $\nu^+$, if a subtask $\omega \in \Omega^{\texttt{s}}_{\nu^+}$
is assigned to a
robot~$n \in \mathcal{I}_{\omega}$ with velocity~$v_n \leq  v^{\star}_{m}$,
the robot may be unable to complete the subtask.
In this case, the predicted completion time is $\widehat{T}_{\omega} = \infty$,
which yields $\zeta_{\nu^+} = \infty$;
thus the node is excluded from the ``Simulation'' step.
\hfill $\blacksquare$
\end{remark}
\subsection{Online Execution and Adaptation}\label{subsec: execution}
\subsubsection{Online Execution and Adaptation}
\setlength{\textfloatsep}{5pt}
\begin{algorithm}[!t]
    \caption{Online Dynamic Task Assignment}
    \label{alg:ta}
    \SetKwInOut{Input}{Input}
    \SetKwInOut{Output}{Output}
    \Input{Robots \(\mathcal{N}\),\, task formula~$\varphi(t)$,\, duration func.~$\rho$,\,
    	      trajectory predictors $\mathbf{\Upsilon}$,\, prediction regions $G_{\mathcal{H}_t}$,\,
              observations $\{Y_{0:t}, V_{0:t}\}$.}
    \Output{Assignment~$\Pi^\star$.}
    Initialize \(P_{\varphi}\) \(\leftarrow \texttt{Compute\_poset}(\varphi(0))\),\,
    \( \Omega_\iota \leftarrow \emptyset\); \\
    Initialize \(\Pi^\star, \, \eta^\star \leftarrow \texttt{CP-MCTS}(P_{\varphi}, \rho,
    \hat{Y}_{\mathcal{H}_0}, G_{\mathcal{H}_0}, x_0)\);\\
    \While{not terminated}
        {
        Each robot \(n \in \mathcal{N}\) applies \(\pi_n\);\\
        Sense $x_{t}$ and $Y_{t}$;\\
        Obtain predictions $\hat{Y}_{\mathcal{H}_t}$;\\
        Update \(\varphi(t)\) by (\ref{eq:task-update});\\
        Update \(P_{\varphi}\) by \(\texttt{Compute\_poset}(\varphi(t))\);\\
        \If{\(\varphi_{\texttt{r}}(t)\)}
        {
             \(\Pi^\star,\, \eta^\star\) \(\leftarrow\) \texttt{CP-MCTS}(\(P_{\varphi}, \rho, \hat{Y}_{\mathcal{H}_t}, G_{\mathcal{H}_t}, x_t\));\\
        }
        \Else{
       	 \(\eta^\star \leftarrow \eta^\star - 1\);\\
            \If{\(\Omega_{\texttt{c}}(t) \neq \emptyset\)}
            {
                Update \(\eta^\star\) by (\ref{eq:update_eta});\\
                \(\Omega_\iota \leftarrow \Omega_\iota \cup \Omega_\texttt{c}(t)\);\\
            }
            \(\eta^\star_t\leftarrow \texttt{Eval}(\Pi^\star, P_{\varphi}, \rho, \hat{Y}_{\mathcal{H}_t}, G_{\mathcal{H}_t}, x_t)\);\\
            \If{\(\eta^\star_t > (1+\gamma)\eta^\star\)}
            {
                \(\widehat{\Pi}_t, \, \widehat{\eta}_t \leftarrow \texttt{CP-MCTS}(P_{\varphi}, \rho, \hat{Y}_{\mathcal{H}_t}, G_{\mathcal{H}_t}, x_t)\);\\
                \If{\(\widehat{\eta}_t < \eta^\star_t\)}
                {
                    \(\Pi^\star \leftarrow \widehat{\Pi}_t,\, \eta^\star \leftarrow \widehat{\eta}_t\);\\
                }
            }
            }
        \(t \leftarrow t+1;\)
    }
\end{algorithm}

As illustrated in Fig.~\ref{fig:fra} and summarized in Alg.~\ref{alg:ta},
the planning scheme consists of two stages:
initial planning and online adaptation.
Following the initial plan,
each robot \(n \in \mathcal{N}\) executes its local plan~\(\pi_n\)
by navigating to the targets specified by the assigned subtasks
and then performing the corresponding actions.
When executing action~\(a^{K_n}_n\) on target~\(m^{K_n}_n\),
the robot~$n$ must wait for collaborators
if \(a^{K_n}_n\) belongs to a multi-robot collaboration~$C_k$ (i.e., \(|C_k|>1\)).
Such partial synchronization is essential to handle uncertainties in navigation and task execution times.
If the action is non-collaborative,
the robot proceeds independently without waiting.

At each time step \(t>0\),
the robot states $x_t$ and target observations~$Y_t$ are updated,
and new predictions~$\hat{Y}_{\mathcal{H}_t}$ are generated from $Y_{0:t}$
using the trajectory predictors~$\mathbf{\Upsilon}$.
Moreover, the task formula \(\varphi(t)\) is updated
by integrating triggered reactive task formula, i.e.,
\begin{equation}
\label{eq:task-update}
    \varphi(t) \triangleq \varphi(t^-)
  \wedge \varphi_{\texttt{r}}(t),
\end{equation}
where \(\varphi(t^-)\) is the previous formula
and \(\varphi_{\texttt{r}}(t)\triangleq \bigwedge_{\bar{e}\in \bar{E}}
 \Diamond\varphi_\texttt{rep}^{\bar{e}}(t)\) is the reactive tasks released at time~$t$.
The current R-poset \(P_{\varphi}\) is updated accordingly.
Replanning is triggered under two conditions:
(I) new reactive tasks appear, i.e., $\varphi_{\texttt{r}}(t) \neq \emptyset$, or
(II) the current plan's performance degrades significantly, i.e.,
$\eta^\star_t > (1+\gamma)\eta^\star$, where $\gamma \in (0,1)$ is the replanning triggering ratio.
Here, the performance metric $\eta^\star_t$ is computed via \(\texttt{Eval}(\cdot)\)
given the latest observations.
For completed tasks,
the expected value $\eta^\star$ is updated as
\begin{equation}
\label{eq:update_eta}
\eta^\star_{+} \triangleq \frac{\big{(}|\varphi(t)| - |\Omega_\iota|
  +|\Omega_{\texttt{c}}(t)|\big{)}}  {|\varphi(t)| - |\Omega_\iota|} \eta^\star_{-},
\end{equation}
where $\eta^\star_{+}$ and $\eta^\star_{-}$ denote the updated and previous values;
$|\varphi(t)|$ is the total number of tasks;
$|\Omega_\iota|$ is the number of completed tasks;
and $|\Omega_{\texttt{c}}(t)|$ is the number of tasks finished at time $t$.
When replanning is triggered, CP-MCTS generates a candidate plan $\widehat{\Pi}_t$ with metric $\widehat{\eta}_t$.
The system adopts this new plan only if it offers improved performance as $\widehat{\eta}_t < \eta^\star_t$.
Otherwise, the current plan $\Pi^\star$ is maintained and 
the metric $\eta^\star$ is monotonically decreased to reflect progress.
This procedure repeats until the system terminates.

\subsubsection{Complexity Analysis}
The computational complexity of Alg. \ref{alg:ta} is analyzed as follows.
Generating a R-poset has worst-case complexity \(\mathcal{O}(J^2)\), where \(J\) is the number of subtasks,
bounded by the number of edges in the NBA.
For task assignment, the worst-case search space
is \(\mathcal{O}(J!\cdot N^J)\),
as subtask orderings are combinatorial and
assignments grow exponentially with the number of robots~$N$.
In contrast, the \textit{rollout} process remains
\(\mathcal{O}(J\cdot N)\), as subtasks are assigned randomly or greedily.
The complexity of stepwise simulation is \(\mathcal{O}(\frac{z T\cdot(J+N)}{\Delta t})\),
where \(z\) is the number of samples, \(T\) is the makespan
and \(\Delta t\) is the time step.

\subsubsection{Generalization}
There are two notable extensions of the proposed scheme.
\textit{(I) Robot Failures}.
In the event of robot failures,
a modified {CP-MCTS} is employed to generate a new plan.
The root node is updated to exclude failed robots and include completed subtasks.
If a failure occurs during execution,
the affected subtask is rescheduled.
The standard CP-MCTS procedure is then applied to update the assignment.
\textit{(II) Dynamic Task Priority}.
When tasks are associated with different priorities~\(\alpha_\ell\),
the optimization objective shifts from the average makespan~$\overline{T}_{\varphi}$
to the weighted makespan~$\widetilde{T}_{\varphi} \triangleq \sum_{\ell=1}^{L_t} \alpha_\ell T_{\varphi_\ell}$,
which emphasizes high-priority tasks.


\section{Numerical Experiments} \label{sec:experiments}
This section presents numerical validations of the proposed method
over large-scale multi-robot systems.
The implementation is in \texttt{Python3} and
tested on a laptop with an AMD Ryzen 9 7845HX CPU.
Simulation and experiment videos are available in the supplementary material.

\subsection{Scenario Description}\label{subsec:description}
For the wildlife protection scenario in Fig. \ref{fig:online main},
four types of animals: antelope, rabbit, elephant, and tiger,
move within an \(800m\times 800m\) outdoor area containing obstacles 
such as reservoirs and hills.
A total of $2000$ synthetic trajectories are collected for the four types of animals
as the training and calibration datasets ($1000$ each),
where initial and target positions are specified and Gaussian noise is added to sampled waypoints.
Four robots of each type are deployed:
fast GRs~\(G_{\texttt{f}}\) to monitor, patrol and arrest;
slow GRs~\(G_{\texttt{s}}\) to monitor, patrol and rescue;
and UAVs~\(U\) to monitor and film.
GRs can navigate safely around obstacles as a team.
Tasks are specified by the following LTL formulas, which require to
monitor, film, and rescue animals, while also patrolling and arresting poachers in designated areas:
\begin{equation*}
\begin{split}
  &\varphi_\texttt{static}  =\varphi_{\texttt{p}-\texttt{s}_1} \wedge \varphi_{\texttt{p}-\texttt{s}_2} \wedge
    \varphi_{\texttt{mf}-\texttt{a}} \wedge \varphi_{\texttt{mf}-\texttt{r}} \wedge \varphi_{\texttt{mf}-\texttt{e}},\\
    &\varphi_{\texttt{r}}(50)  = \varphi_{\texttt{mf}-\texttt{t}}, \;
    \varphi_{\texttt{r}}(90)  = \Diamond\texttt{arrest}_{\texttt{s}_3} \wedge \Diamond\texttt{rescue}_\texttt{e},
\end{split}
\end{equation*}
where \(\varphi_\texttt{p}\), \(\varphi_\texttt{mf}\) follow the same structure as in (\ref{eq:task}).
Two scenes with different animal motion patterns
are considered (Fig.~\ref{fig:online main}):
\textbf{Scene-1}: targets exhibit piecewise uniform linear motion;
\textbf{Scene-2}: targets follow general smooth trajectories.
The main parameters are: CP failure probability~$\delta=0.15$,
CP-MCTS time budget~\(t_b=10s\), exploration factor~\(Q=1.5\),
random factor~\(\epsilon=0.3\), number of samples~\(z=50\),
risk level~\(\alpha=0.05\), and replanning triggering ratio~\(\gamma =0.2\).

\begin{figure}[t]
  \centering
  \includegraphics[width=0.95\linewidth]{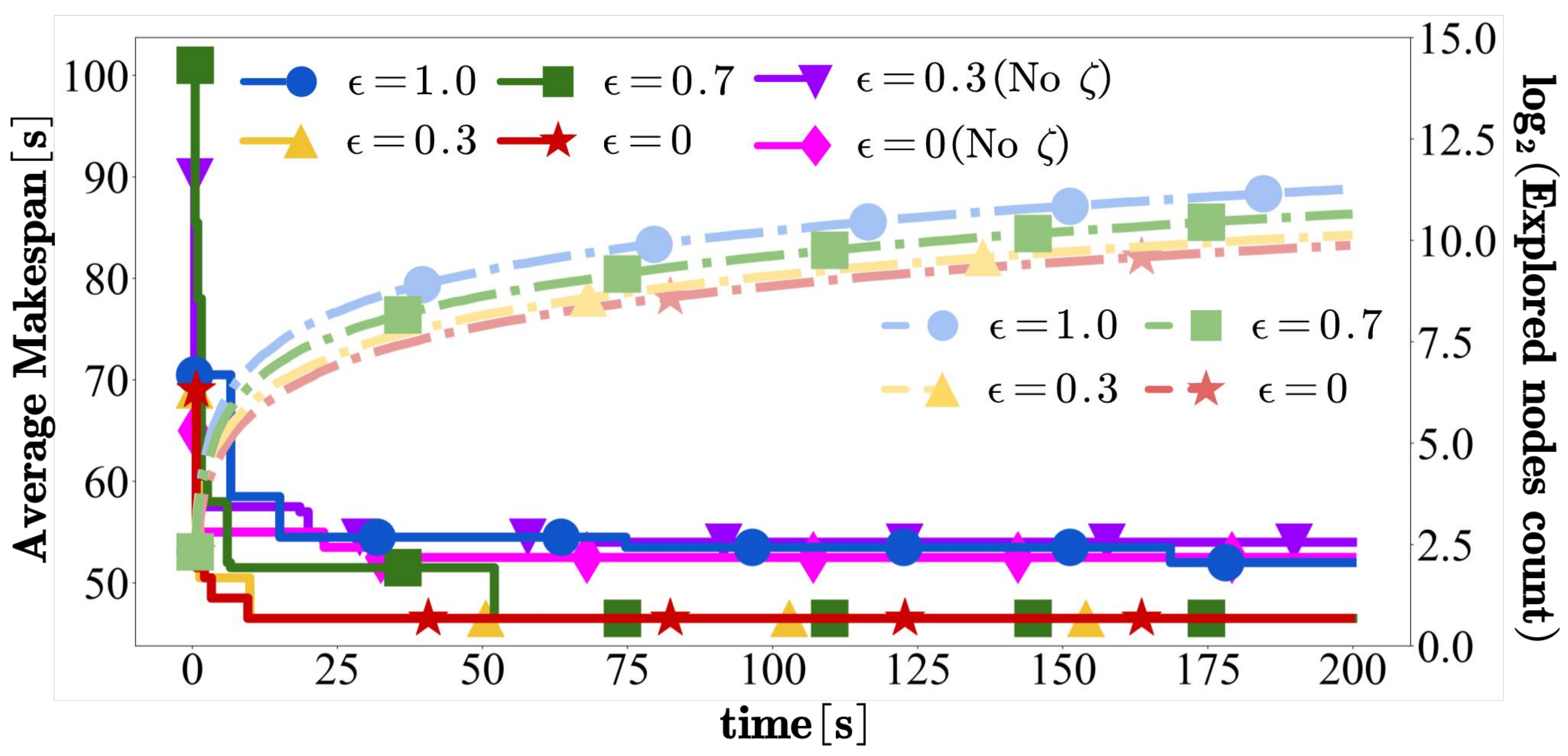}
  \vspace{-3mm}
  \caption{The average makespan and
    the number of explored nodes
    with different random factors~\(\epsilon\) and
    expansion strategies (with or without CP-based metric~$\zeta$ in (\ref{eq:metric}))
    during initial planning in Scene-1.
  }
  \label{fig:convergence}
  \vspace{-0.5mm}
\end{figure}

\subsection{Results}\label{subsec:results}
The results are shown in Figs. \ref{fig:online main} and \ref{fig:gantt}.
At \(t=0\),
trajectory prediction takes $0.05$s,
R-posets computation for $8$ static tasks takes $0.03$s,
and the ``Simulation'' at each node averages $0.25$s with $z=50$ samples.
The metric $\zeta_{\nu^+}$ in (\ref{eq:metric}) ranges from $24.5$s to $72.3$s,
and a plan with average makespan $\eta^\star_0=48$s is generated
after exploring $118$ nodes within $t_b=10$s.
At \(t=43s\), replanning is triggered due to performance degradation,
with $\eta^\star_{43} = 14.2$s exceeding the threshold $(1+\gamma)\eta^\star= 13.7$s for $\gamma=0.2$.
After exploring $140$ nodes, the new plan yields \(\widehat{\eta}_{43}=16.4\)s,
which does not improve performance; thus, the current plan is retained.
At \(t=50s\), the appearance of a new target \texttt{tiger} and 
task~$\varphi_{\texttt{mf-t}}$ triggers replanning
with $4$ subtasks, where $389$ nodes are explored,
producing \(\eta^\star_{50}=28.2\)s.
At \(t=90s\), reactive tasks $\texttt{arrest}_{\texttt{s}_3}$ and $\texttt{rescue}_{\texttt{e}}$ 
trigger replanning with $3$ unfinished subtasks,
exploring $155$ nodes and yielding \(\eta^\star_{90}=40.3\)s.
The system terminates at \(t=147\)s, with an average makespan of \(67.6\)s over $12$ subtasks.
Fig.~\ref{fig:convergence} illustrates convergence and node exploration at $t=0$.
The algorithm is deemed converged
when the best plan remains unchanged for more than $100$s.
As the random factor~\(\epsilon\) increases, more nodes are explored,
but convergence time grows.
With heuristic-based robot selection during \textit{rollout},
convergence is achieved within \(10\)s for \(\epsilon = 0\) and \(\epsilon = 0.3\).
Moreover, using $\zeta_{\nu^+}$ as in (\ref{eq:metric}) to filter child nodes
accelerates tree exploration and improves performance.

\begin{figure}[t]
  \centering
  \includegraphics[width=1.0\linewidth]{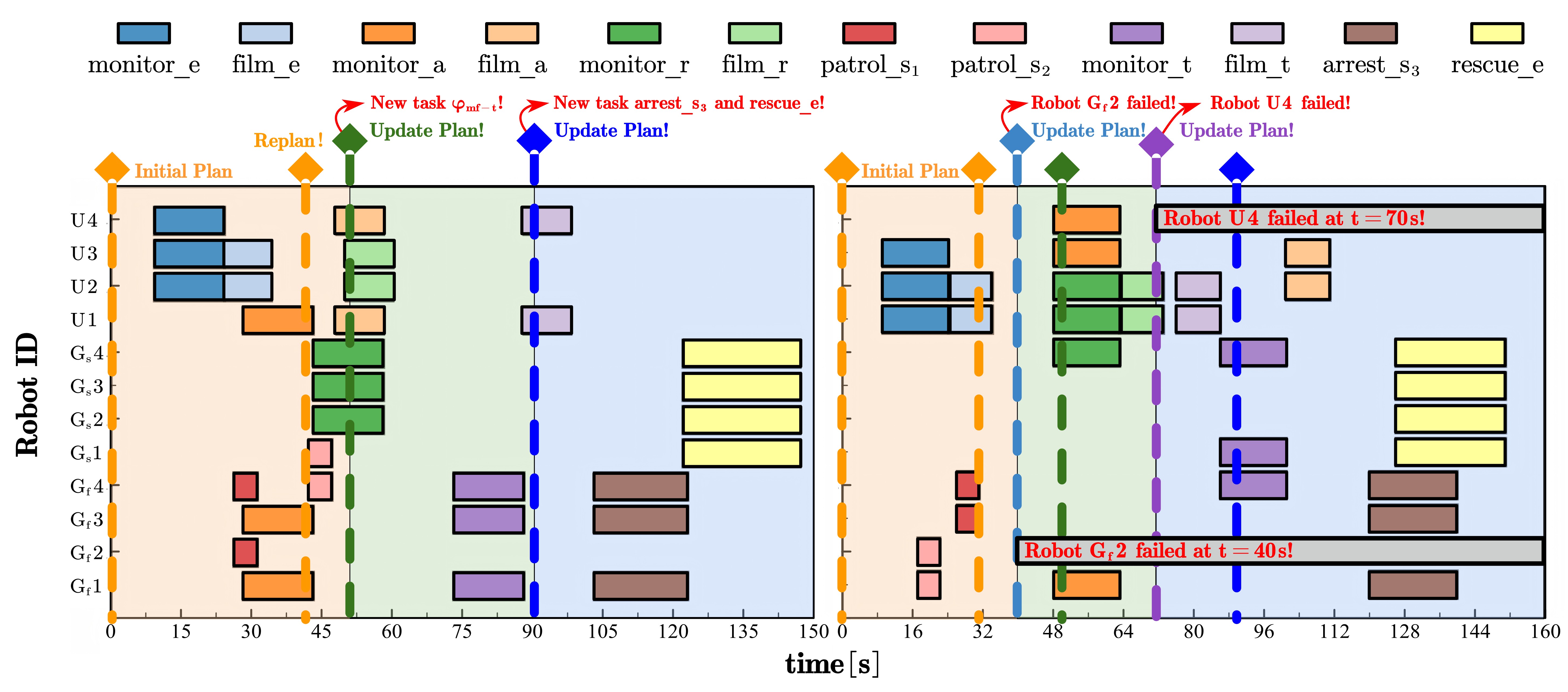}
  \vspace{-8mm}
  \caption{
    \textbf{Left:} Gantt chart of Scene-1,
    where replanning happens when the predicted value~$\eta^\star_t$ exceeds
    a threshold (orange line) and new tasks are triggered (green and blue lines).
    \textbf{Right:} Gantt chart of Scene-1
    where two robots fail at~$40s$ and $70s$ (in grey), respectively.}
  \label{fig:gantt}
\end{figure}

\subsection{Comparisons}\label{subsec:comparisons}
To evaluate the effectiveness of our method (\textbf{Ours}),
we compare against \textbf{six} baselines:
{Mixed Integer Linear Programming (\textbf{MILP})}:
task decomposition and assignment via integer optimization~\cite{luo2022temporal};
{Branch and Bound (\textbf{BnB})}:
a search-based method from~\cite{LIU2024111377};
{No Trajectory Prediction (\textbf{NTP})}
and {No Uncertainty (\textbf{NU})}:
{Ours} without trajectory prediction or without uncertainty regions, respectively;
\textbf{Ours-G}:
{Ours} with a GRU-based trajectory predictor;
\textbf{Ours-P10}:
Ours with periodic replanning every 10s instead of event-triggered replanning.
All methods are first evaluated for task allocation at \(t=0\) in Scene-1,
executed over $500$ target trajectory samples.
Metrics include the mean and variance of average makespan and the solution time.
They are further tested online in Scene-1 and Scene-2 with $500$ samples,
together with a clairvoyant strategy (\textbf{CS}) that has access to future trajectories
and applies NU for assignment.
Static methods ({MILP}, {BnB}, {NTP})
model each dynamic target as fixed at its most recently observed position during planning.

As shown in Table \ref{table:comp},
Ours outperforms static methods with reductions of
\(16.5\%\), \(24.6\%\), and \(22.5\%\) in mean makespan
and \(64.9\%\), \(67.3\%\), and \(74.8\%\) in variance
across Scene-1 ($t=0$), Scene-1, and Scene-2, respectively.
{BnB finds the first feasible solution quickly
but converges slowly due to random exploration.
{MILP} is the most time-consuming because of its exponential complexity.
{Ours} requires more computation than {NTP} and {BnB} due to sampling and 
stepwise simulation, but achieves more reliable outcomes.}


Although {NU} achieves a $6.7\%$ lower mean makespan than {Ours} in Scene-1 at $t=0$,
it incurs $1.2 \times$ higher variance and performs worse in online settings,
confirming that {Ours} offers more risk-averse assignments via CP.
{Ours-G} performs comparably to {Ours},
showing adaptability to different predictors.
{Ours-P10} increases mean makespan by $10.2\%$ and variance by $4.2\%$ relative to {Ours},
underscoring the benefit of event-triggered replanning.
Finally, Ours remains within $6.0\% \sim 7.5\%$ of CS in mean makespan with only minor variance increases,
indicating performance close to CS without prior knowledge of target motion.

\begin{table}[t]
  \begin{center}
  \begin{threeparttable}
    \caption{COMPARISON AGAINST BASELINES}\label{table:comp}
    \setlength{\tabcolsep}{0.20\tabcolsep}
    \centering
    \renewcommand{\arraystretch}{0.9}
    \begin{tabular}{c c c p{0.3mm} c c c c c}
       \toprule[1pt]
      \midrule
      \multirow{2}{*}{\textbf
{Method}} & \multicolumn{2}{c}{\textbf{Scene-1 ($\mathbf{t=0}$)}} & & \multicolumn{1}{c}{\textbf{Scene-1}} & & \multicolumn{1}{c}{\textbf{Scene-2}} \\
      \cmidrule{2-3} \cmidrule{5-5} \cmidrule{7-7}
      & \textbf{\makecell{M($\mathbf{\overline{T}_{\phi}}$), V($\mathbf{\overline{T}_{\phi}}$)}}\tnote{a}
      & $\mathbf{t_p}[s]\tnote{b}$
      & {} 
      & \textbf{\makecell{M($\mathbf{\overline{T}_{\phi}}$), V($\mathbf{\overline{T}_{\phi}}$)}}
      & {} 
      & \textbf{\makecell{M($\mathbf{\overline{T}_{\phi}}$), V($\mathbf{\overline{T}_{\phi}}$)}}
\\
      \midrule
      \textbf{Ours} & \underline{48.0}, \textbf{3.25} & 1.22, 4.70 & & 67.6, \underline{5.32} & & \underline{68.5}, \underline{4.34} \\[.1cm]
      {NU} & \textbf{44.8}, 7.05 & 0.30, \underline{2.74} & & 72.5, 14.32 & & 71.8, 12.59\\[.1cm]
      {NTP} & 56.2, 10.05 &  \underline{0.25}, \textbf{0.53} & & 81.9, 12.41 & & 83.0,  17.21\\[.1cm]
      {MILP} & 59.1, 8.79 & 3.24, 905.6 & & 94.9, 16.33  & & 94.8,  18.12 \\[.1cm]
      {BnB} & 57.1, 8.91  & \textbf{0.06}, 35.22 & & 92.0, 20.13  && 87.5, 16.24  \\[.1cm]
      {Ours-G} & 49.9, \underline{3.98} & 1.31, 5.21 & & \underline{66.9}, 6.12 & & 70.2, 5.43\\[.1cm]
      {Ours-P10} & / & / & & 79.3, 11.45 & & 77.8, 12.67\\[.1cm]
      {CS}  & /  & / & & \textbf{63.8}, \textbf{3.24} && \textbf{65.3}, \textbf{2.96}  \\[.1cm]
      \midrule
      \bottomrule[1pt]
    \end{tabular}
    \renewcommand{\arraystretch}{1.0}
    \begin{tablenotes}
  \footnotesize
    \item[a] \textbf{M($\mathbf{\overline{T}_{\phi}}$)}, \textbf{V($\mathbf{\overline{T}_{\phi}}$)}: 
    mean and variance of average makespan~$\overline{T}_{\phi}$.
    \item[b] The solution time~$\mathbf{t_p}$ is measured by two timestamps: 
      (i) when the first solution is returned;
      (ii) when the algorithm converges.
    \item[c] Best values are in \textbf{bold}; second-best are \underline{underlined}.
  \end{tablenotes}
  \end{threeparttable}
  \end{center}
  \vspace{-2mm}
\end{table}
\begin{table}[!t]
  \begin{center}
  \renewcommand{\arraystretch}{1.15}
  \begin{threeparttable}
    \caption{SCALABILITY ANALYSIS RESULTS}\label{table:scalability}
    \vspace{-0.05in}
    \setlength{\tabcolsep}{0.4\tabcolsep}
    \centering
    \begin{tabular}{c c c c c c c}
       \toprule[1pt]
      \midrule
      \multirow{2}{*}{\textbf{($\mathbf{G_{\texttt{f}}}$, $\mathbf{G_{\texttt{s}}}$, $\mathbf{U}$)}\tnote{a}}
      & \multicolumn{3}{c}{$\mathbf{t_p}$ ($\mathbf{M=3}$)$[s]$} & & \multicolumn{2}{c}{$\mathbf{t_p} (\mathbf{F=16}$)$[s]$} \\
      \cmidrule{2-4} \cmidrule{6-7}
      & $\mathbf{F=8}$
      & $\mathbf{F=12}$
      & $\mathbf{F=16}$
      & {}
      & $\mathbf{M=6}$
      & $\mathbf{M=9}$
      \\
      \midrule
      (4, 4, 4)   & 1.22, 4.7  & 2.03, 8.2  & 3.93, 15.5 & & 2.04, 13.2 & 1.63, 15.6 \\
      (8, 8, 8)   & 2.71, 4.1  & 5.15, 9.2  & 7.64, 18.1  & & 5.29, 12.1 & 7.59, 19.1 \\
      (12, 12, 12)   & 3.81, 4.9  & 6.19, 9.5 & 9.39, 18.3 & & 8.35, 12.8 & 9.59, 16.3 \\
      \midrule
      \bottomrule[1pt]
    \end{tabular}
    \begin{tablenotes}
    \footnotesize
      \item[a] \textbf{($\mathbf{G_{\texttt{f}}}$, $\mathbf{G_{\texttt{s}}}$, $\mathbf{U}$)}: the number of three types of robots.
    \end{tablenotes}
  \end{threeparttable}
  \end{center}
  \vspace{-3mm}
\end{table}

\subsection{Generalization}
\subsubsection{Scalability Analysis}
Scalability is evaluated with respect to
the number of robots $N$, tasks $F$, and targets $M$,
as summarized in Table~\ref{table:scalability}.
With fixed $F=8$ and $M=3$,
increasing $N$ from $12$ to $36$
raises the computation time for the first solution from~$1.22$s to $3.81$s,
while the convergence time remains around $4.7$s.
With fixed $M=3$ and $N=12$,
increasing $F$ from $8$ to $16$
extends convergence time from $4.7$s to $15.5$s,
whereas the time for the first solution grows only modestly from $1.22$s to $3.93$s,
owing to the polynomial complexity of \textit{rollout}.
With fixed $F=16$ and $N=12$,
increasing $M$ from $3$ to $9$
causes fluctuations in the first-solution time,
but convergence time stays around $15.0$s.

\subsubsection{Robot Failure}
In Scene-1, ground robot~\(G_{\texttt{f}}2\) fails at $t=40s$,
and UAV~\(U4\) at $t=70s$.
As shown in the Gantt chart of Fig. \ref{fig:gantt},
the failure of \(G_{\texttt{f}}2\) during antelope monitoring
triggers replanning, where \(U3\) replaces it.
At $t=70s$, the failure of UAV~\(U4\) during antelope filming
leads to another update,
assigning \(U2\) and \(U3\) to complete the task.
Despite these failures, the team successfully completes all tasks by~\(t=151s\), 
with an average makespan of \(74.9s\).

\subsubsection{Dynamic Task Priority}
In Scene-1,
the antelope-related task~\(\varphi_{\texttt{mf}-\texttt{a}}\) is assigned a priority coefficient of \(0.3\),
while other tasks are given \(0.1\).
When optimizing for the average makespan~$\overline{T}_{\varphi}$,
the completion times for monitoring and filming the antelope are $T_{\texttt{m}-\texttt{a}}= 92s$ and $T_{\texttt{f}-\texttt{a}}=112s$.
Under weighted makespan~$\widetilde{T}_{\varphi}$ minimization,
these times decrease to $82s$ and $102s$, respectively,
each reduced by $10s$.

\subsubsection{High-fidelity ROS Simulation}
In the second scenario (Figs.~\ref{fig:online main} and \ref{fig:ros_main}), three types of suspects:
on foot~$\texttt{f}$, by bike~$\texttt{b}$, and by car~$\texttt{c}$
flee through city roads.
Two types of robots, {four} of each, are deployed:
UAVs~$U_{\texttt{t}}$ for tracking suspects and scouting,
and UAVs~$U_{\texttt{i}}$ for interception and scouting.
Tasks are given by
  $\varphi_\texttt{static}  =\varphi_{\texttt{s}-\texttt{s}_1} \wedge \varphi_{\texttt{s}-\texttt{s}_2} \wedge
    \varphi_{\texttt{ti}-\texttt{f}} \wedge \varphi_{\texttt{ti}-\texttt{b}} \wedge \varphi_{\texttt{ti}-\texttt{c}}$,
    $\varphi_{\texttt{r}}(40)  = \varphi_{\texttt{s}-\texttt{s}_3}, \;
    \varphi_{\texttt{r}}(80)  =
    \varphi_{\texttt{s}-\texttt{s}_4},
    \varphi_{\texttt{s}-\texttt{s}_1} = \Diamond \texttt{scout}_{\texttt{s}_1}$,
    $\varphi_{\texttt{ti}-\texttt{f}} = \Diamond(\texttt{track}_\texttt{f}\wedge\neg\texttt{intercept}_\texttt{f}\wedge\Diamond\texttt{intercept}_\texttt{f})$.
Task assignments are updated at \(t=40s\) and \(t=80s\) when new reactive tasks appear.
Additional replanning occurs at \(t=64s\), \(t=79s\), and \(t=104s\), though the plan remains unchanged.
The system terminates at \(t=136s\),
completing all $10$ tasks with an average makespan of \(66.9s\).

\begin{figure}[t]
  \centering
  \includegraphics[width=1.0\linewidth]{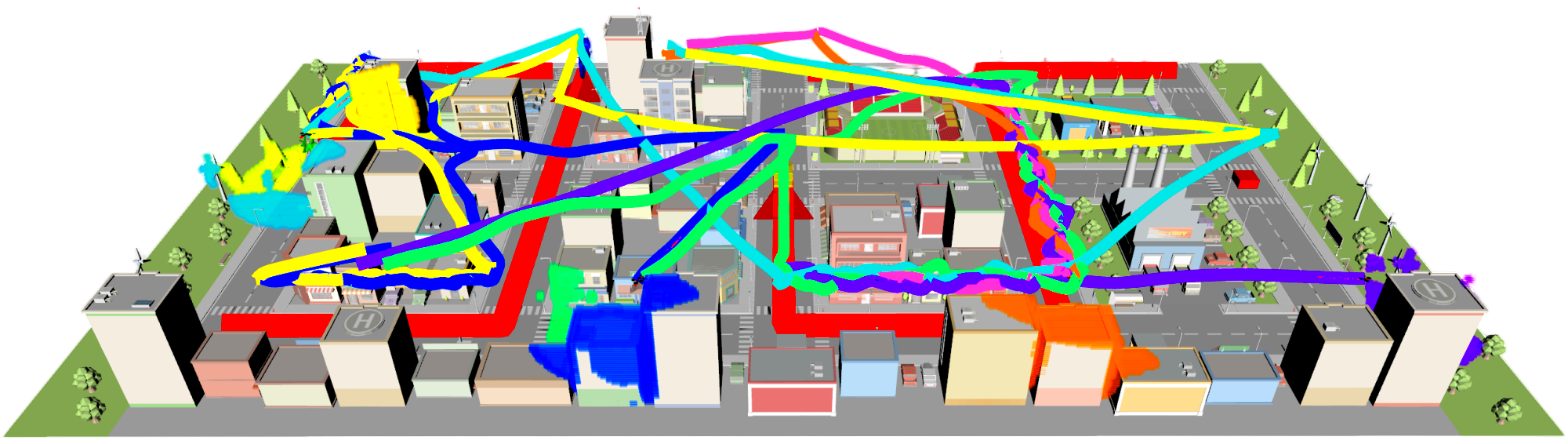}
  \hfil
  \includegraphics[width=0.95\linewidth]{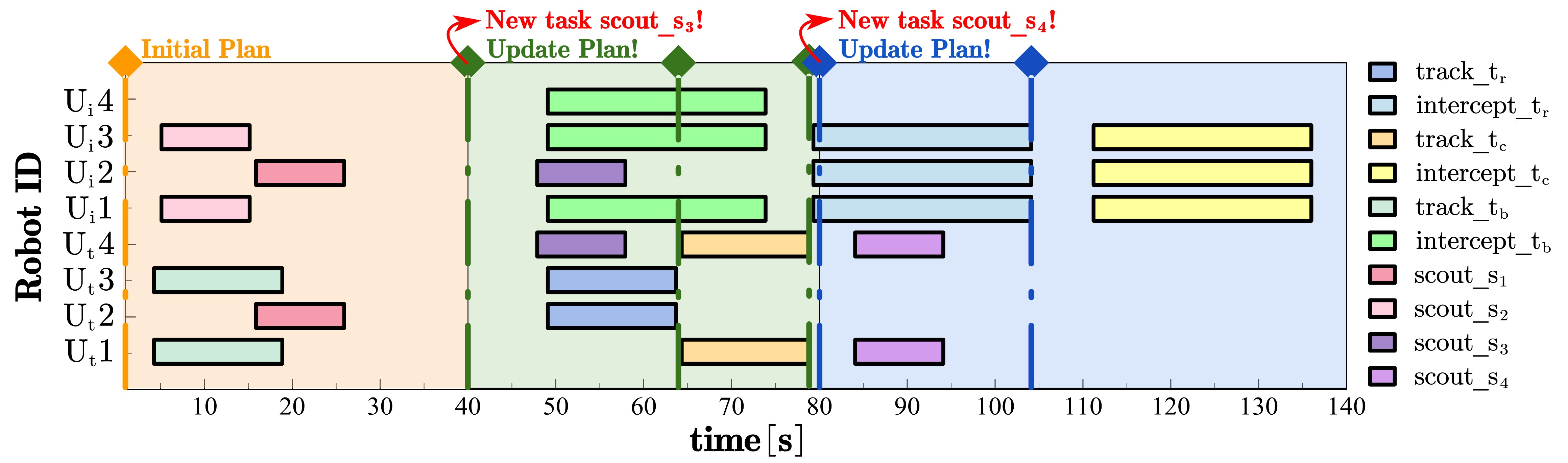}
  \vspace{-2.5mm}
  \caption{ROS simulation results.
  \textbf{Top}: Robot and target trajectories.
  \textbf{Bottom}: Gantt chart of the execution timeline with task allocation and replanning.
  }
  \label{fig:ros_main}
  \vspace{-0.5mm}
\end{figure}

\subsection{Hardware Experiments}
For further validation with hardware,
we built a setup similar to the second scenario with
$4$ robots (\texttt{UAV}: $2 U_{\texttt{t}}$, $2 U_{\texttt{i}}$)
and $2$ targets (\texttt{UGV}: $a_1$, $a_2$) in a $4.95m \times 4.95m$ workspace,
with the motion capture system \texttt{OptiTrack} providing their global states.
Each robot communicates wirelessly with a control PC via \texttt{ROS1}.
Mature navigation controllers are adopted and omitted for brevity.
The tasks are specified as:
$\varphi_\texttt{static}  =\varphi_{\texttt{s}-\texttt{s}_1} \wedge \varphi_{\texttt{s}-\texttt{s}_2} \wedge
  \varphi_{\texttt{ti}-\texttt{a}_1} \wedge \varphi_{\texttt{ti}-\texttt{a}_2}$,
$\varphi_{\texttt{r}}(80)  = \varphi_{\texttt{s}-\texttt{s}_3}.$
The algorithm uses the same parameters as in simulation.
Assignments are updated at \(t=47s\) (due to performance degradation) 
and \(t=80s\) (due to a new task).
The system terminates at \(t=113s\),
completing all $7$ tasks with an average makespan of \(64.7s\).
The resulting trajectory, snapshots, and Gantt chart are shown in 
Figs. \ref{fig:online main} and \ref{fig:exp_main}.

\section{Conclusion} \label{sec:conclusion}
This paper has presented \textbf{UMBRELLA}, an online multi-robot coordination framework
for collaborative temporal tasks with dynamic targets.
It achieves substantial reductions in both the mean and variance of the average makespan,
while guaranteeing satisfaction of spatial-temporal task specifications.
Future work includes intention-aware prediction for dynamic targets
and finer motion constraints for robots.

\begin{figure}[t]
    \centering
    \includegraphics[width=1.0\linewidth]{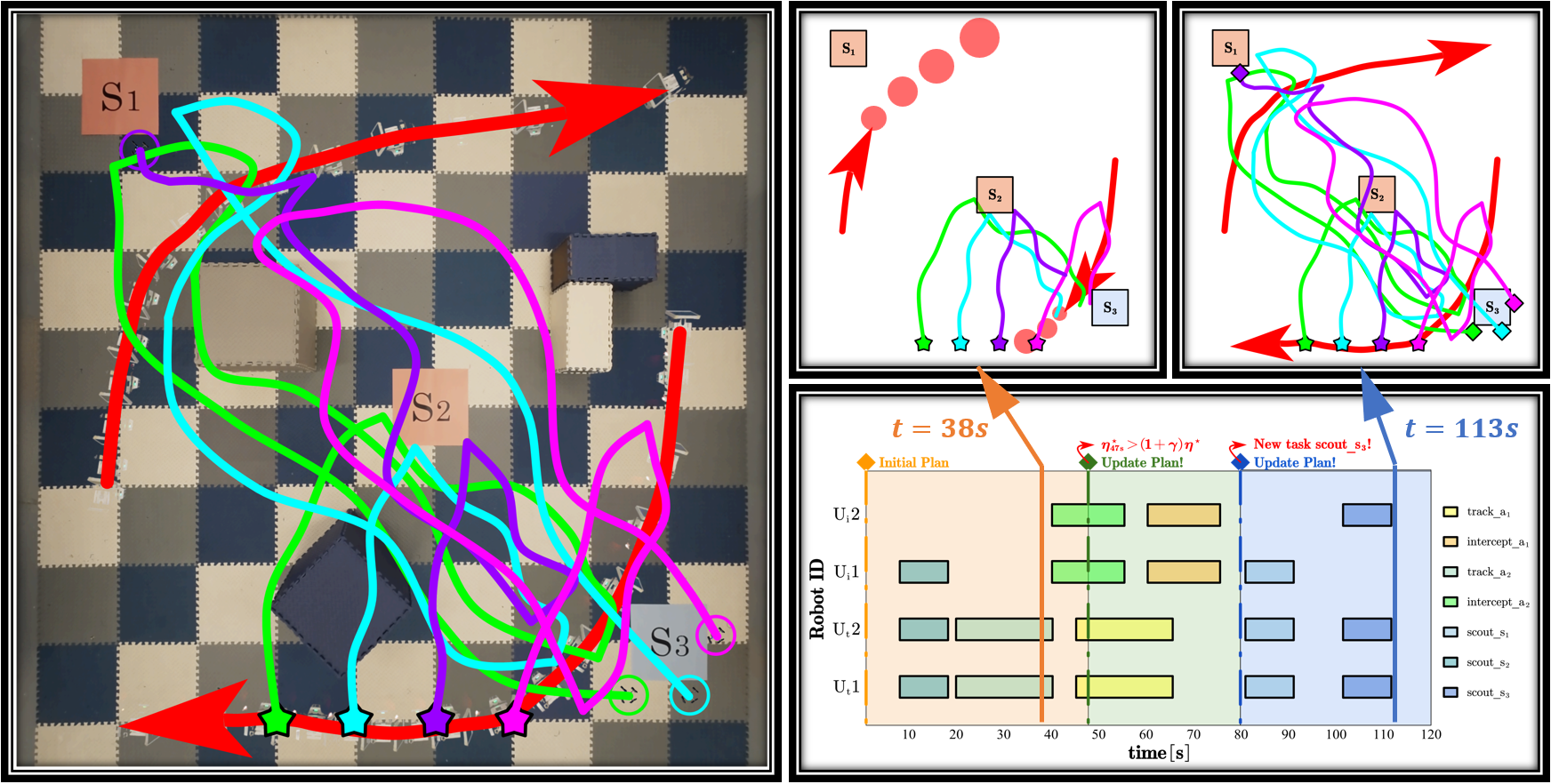}
    \vspace{-7.0mm}
    \caption{Hardware experiment results.
    \textbf{Left:} Recorded robot and target trajectories.
    \textbf{Top-right:} Example trajectory snapshots highlighting robot-target interactions.
    \textbf{Bottom-right:} Gantt chart of the execution timeline.
    }
    \label{fig:exp_main}
    \vspace{-0.5mm}
  \end{figure}



\bibliographystyle{IEEEtran}
\bibliography{contents/references}

@book{belta2017formal,
  title  = {Formal Methods for Discrete-Time Dynamical Systems},
  author = {Belta, Calin and Yordanov, Boyan and Gol, Ebru Aydin},
  volume = {15},
  year   = {2017},
  publisher={Springer}
}

@article{LIU2024111377,
  title   = {Time minimization and online synchronization for multi-agent systems under collaborative temporal logic tasks},
  journal = {Automatica},
  volume  = {159},
  pages   = {111377},
  year    = {2024},
  issn    = {0005-1098},
  author  = {Zesen Liu and Meng Guo and Zhongkui Li}
}

@inproceedings{2006Bandit,
  title     = {Bandit based monte-carlo planning},
  author    = {Kocsis, Levente and Szepesv{\'a}ri, Csaba},
  booktitle = {Eur. Conf. Mach. Learn.},
  pages     = {282--293},
  year      = {2006}
}

@article{luo2022temporal,
  title   = {Temporal Logic Task Allocation in Heterogeneous Multi-Robot Systems},
  author  = {Luo, Xusheng and Zavlanos, Michael M},
  journal = {IEEE Trans. Robot.},
  year    = {2022},
  volume  = {38},
  number  = {6},
  pages   = {3602-3621}
}

@article{schillinger2018simultaneous,
  title   = {Simultaneous task allocation and planning for temporal logic goals in heterogeneous multi-robot systems},
  author  = {Schillinger, Philipp and B{\"u}rger, Mathias and Dimarogonas, Dimos V},
  journal = {Int. J. Robot. Res.},
  volume  = {37},
  number  = {7},
  pages   = {818--838},
  year    = {2018}
}

@article{kantaros2020stylus,
  title   = {Stylus*: A temporal logic optimal control synthesis algorithm for large-scale multi-robot systems},
  author  = {Kantaros, Yiannis and Zavlanos, Michael M},
  journal = {Int. J. Robot. Res.},
  volume  = {39},
  number  = {7},
  pages   = {812--836},
  year    = {2020}
}

@book{baier2008principles,
  title     = {Principles of Model Checking},
  author    = {Baier, Christel and Katoen, Joost-Pieter},
  year      = {2008},
  publisher = {MIT Press}
}

@article{sahin2019multirobot,
  title   = {Multirobot coordination with counting temporal logics},
  author  = {Sahin, Yunus Emre and Nilsson, Petter and Ozay, Necmiye},
  journal = {IEEE Trans. Robot.},
  volume  = {36},
  number  = {4},
  pages   = {1189--1206},
  year    = {2019}
}

@article{toth2002overview,
  title     = {An overview of vehicle routing problems},
  author    = {Toth, Paolo and Vigo, Daniele},
  journal   = {The Vehicle Routing Problem},
  pages     = {1--26},
  year      = {2002},
  publisher = {SIAM}
}

@inproceedings{varava2017herding,
  title     = {Herding by Caging: a Topological Approach towards Guiding Moving Agents via Mobile Robots.},
  author    = {Varava, Anastasiia and Hang, Kaiyu and Kragic, Danica and Pokorny, Florian T},
  booktitle = {Robotics Sci. Syst.},
  pages     = {696--700},
  year      = {2017}
}

@inproceedings{cliff2015online,
  title     = {Online localization of radio-tagged wildlife with an autonomous aerial robot system},
  author    = {Cliff, Oliver M and Fitch, Robert and Sukkarieh, Salah and Saunders, Debra L and Heinsohn, Robert},
  booktitle = {Robotics Sci. Syst.},
  year      = {2015}
}

@article{guo2018multirobot,
  title     = {Multirobot data gathering under buffer constraints and intermittent communication},
  author    = {Guo, Meng and Zavlanos, Michael M},
  journal   = {IEEE Trans. Robot.},
  volume    = {34},
  number    = {4},
  pages     = {1082--1097},
  year      = {2018},
  publisher = {IEEE}
}

@article{guo2015multi,
  title     = {Multi-agent plan reconfiguration under local LTL specifications},
  author    = {Guo, Meng and Dimarogonas, Dimos V},
  journal   = {Int. J. Robot. Res.},
  volume    = {34},
  number    = {2},
  pages     = {218--235},
  year      = {2015},
  publisher = {SAGE Publications Sage UK: London, England}
}

@article{shukla2016application,
  title     = {Application of robotics in offshore oil and gas industry—A review Part II},
  author    = {Shukla, Amit and Karki, Hamad},
  journal   = {Robotics Auton. Syst.},
  volume    = {75},
  pages     = {508--524},
  year      = {2016},
  publisher = {Elsevier}
}

@article{bock2007construction,
  title   = {Construction robotics},
  author  = {Bock, Thomas},
  journal = {Auton. Robots},
  volume  = {22},
  number  = {3},
  pages   = {201--209},
  year    = {2007}
}

@inproceedings{Sama2023,
  author    = {Kalluraya, Samarth and Pappas, George J. and Kantaros, Yiannis},
  booktitle = {IEEE Int. Conf. Robot. Autom.},
  title     = {Multi-Robot Mission Planning in Dynamic Semantic Environments},
  year      = {2023},
  pages     = {1630-1637}
}

@inproceedings{Zhang2024,
  author    = {Zhang, Yuqing and Kalluraya, Samarth and Pappas, George J. and Kantaros, Yiannis},
  booktitle = {IEEE Conf. Decis. Control},
  title     = {Reactive Planning for Teams of Heterogeneous Robots with Dynamic Collaborative Temporal Logic Missions},
  year      = {2024},
  pages     = {1599-1606}
}

@article{Angelopoulos2021CP,
  title   = {A gentle introduction to conformal prediction and distribution-free uncertainty quantification},
  author  = {Angelopoulos, Anastasios N and Bates, Stephen},
  journal = {arXiv:2107.07511},
  year    = {2021}
}

@article{Lars2023CP,
  author  = {Lindemann, Lars and Cleaveland, Matthew and Shim, Gihyun and Pappas, George J.},
  journal = {IEEE Robot. Autom. Lett.},
  title   = {Safe Planning in Dynamic Environments Using Conformal Prediction},
  year    = {2023},
  volume  = {8},
  number  = {8},
  pages   = {5116-5123}
}

@article{Yu2023CP,
  title   = {Signal temporal logic control synthesis among uncontrollable dynamic agents with conformal prediction},
  author  = {Yu, Xinyi and Zhao, Yiqi and Yin, Xiang and Lindemann, Lars},
  journal = {arXiv:2312.04242},
  year    = {2023}
}

@inproceedings{tonkens2023,
  title={Scalable safe long-horizon planning in dynamic environments leveraging conformal prediction and temporal correlations},
  author={Tonkens, Sander and Sun, Sophia and Yu, Rose and Herbert, Sylvia},
  booktitle = {IEEE Int. Conf. Robot. Autom.},
  year={2023}
}

@article{Hochreiter1997LSTM,
  title   = {Long Short-term Memory},
  author  = {Hochreiter, S},
  journal = {Neural Comput.},
  year    = {1997}
}

@article{Liu2024fproduct,
  title     = {Fast and Adaptive Multi-Agent Planning under Collaborative Temporal Logic Tasks via Poset Products},
  author    = {Liu, Zesen and Guo, Meng and Bao, Weimin and Li, Zhongkui},
  journal   = {Research},
  volume    = {7},
  pages     = {0337},
  year      = {2024},
  publisher = {AAAS}
}

@article{robin2016multi,
  title   = {Multi-robot target detection and tracking: taxonomy and survey},
  author  = {Robin, Cyril and Lacroix, Simon},
  journal = {Auton. Robots},
  volume  = {40},
  pages   = {729--760},
  year    = {2016}
}

@article{rnn,
  title={Recurrent neural networks},
  author={Medsker, Larry R and Jain, Lakhmi and others},
  journal={Design and Applications},
  volume={5},
  number={64-67},
  pages={2},
  year={2001}
}

@article{gru,
  title={On the properties of neural machine translation: Encoder-decoder approaches},
  author={Cho, Kyunghyun and Van Merri{\"e}nboer, Bart and Bahdanau, Dzmitry and Bengio, Yoshua},
  journal={arXiv preprint arXiv:1409.1259},
  year={2014}
}

\vfill

\end{document}